\newtheorem{theorem}{Theorem}
\newcolumntype{?}{!{\vrule width 1pt}}
\DeclareMathOperator*{\argmax}{arg\,max}
\icmltitlerunning{Robust Decision Trees Against Adversarial Examples}
\begin{document}

\twocolumn[
\icmltitle{Robust Decision Trees Against Adversarial Examples}



\icmlsetsymbol{equal}{*}

\begin{icmlauthorlist}
\icmlauthor{Hongge Chen}{mit}
\icmlauthor{Huan Zhang}{ucla}
\icmlauthor{Duane Boning}{mit}
\icmlauthor{Cho-Jui Hsieh}{ucla}

\end{icmlauthorlist}

\icmlaffiliation{mit}{MIT, Cambridge, MA 02139, USA}
\icmlaffiliation{ucla}{UCLA, Los Angeles, CA 90095, USA}

\icmlcorrespondingauthor{Hongge Chen}{chenhg@mit.edu}
\icmlcorrespondingauthor{Huan Zhang}{huan@huan-zhang.com}

\icmlkeywords{Robustness, Decision Tree, Ensemble Methods}

\vskip 0.3in
]



\printAffiliationsAndNotice{}  

\begin{abstract}
Although adversarial examples and model robustness have been extensively studied in the context of linear models and neural networks,
research on this issue in tree-based models and how to make tree-based models robust against adversarial examples is still limited. In this paper, we show that tree based models are also vulnerable to adversarial examples and develop a novel algorithm to learn robust trees. At its core, our method aims to optimize the performance under the worst-case perturbation of input features, which leads to a max-min saddle point problem. 
Incorporating this saddle point objective into the decision tree building procedure is non-trivial due to the discrete nature of trees---a naive approach to finding the best split according to this saddle point objective will take exponential time. 
To make our approach practical and scalable, 
we propose efficient tree building algorithms by approximating the inner minimizer in this saddle point problem, and present efficient implementations for 
classical information gain based trees as well as state-of-the-art 
tree boosting models such as XGBoost. 
Experimental results on real world datasets demonstrate that the proposed algorithms can substantially improve the robustness of tree-based models against adversarial examples. 
\end{abstract}
\section{Introduction}
The discovery of adversarial examples in various deep learning models~\citep{szegedy2013intriguing,kos2018adversarial,cheng2018seq2sick,chen2018attacking,carlini2018audio,huang2017adversarial} has led to extensive studies of deep neural network (DNN) robustness under such maliciously crafted subtle perturbations.  Although deep learning-based model robustness has been well-studied in the recent literature from both attack and defense perspectives, studies on the robustness of tree-based models are quite limited~\citep{papernot2016transferability}.

In our paper, we shed light on the adversarial robustness of an important class of machine learning models --- decision trees. Among machine learning models used in practice, tree-based methods stand out in many applications, with state-of-the-art performance. Tree-based methods have achieved widespread success due to their simplicity, efficiency, interpretability, and scalability on large datasets. They have been suggested as an advantageous alternative to deep learning in some cases~\citep{zhou2017deep}. In this paper, we study the robustness of tree-based models under adversarial attacks, and more importantly, we propose a novel robust training framework for tree-based models. Below we highlight our major contributions:
\begin{itemize}[wide,noitemsep,topsep=0pt]
    \item We study the robustness of decision tree-based machine learning algorithms through the lens of adversarial examples. We study both classical decision trees and state-of-the-art ensemble boosting methods such as XGBoost. We show that, similar to neural networks, tree-based models are also vulnerable to adversarial examples. 
    \item We propose a novel robust decision tree training framework to improve robustness against adversarial examples. This method seeks to optimize the worst case condition by solving a max-min problem. This framework is quite general and can be applied to tree-based models with any score function used to choose splitting thresholds. To the best of our knowledge, this is the first work contributing a general robust decision tree training framework against adversarial examples.
    \item We implement our framework in both classical information gain based classification trees and state-of-the-art large-scale tree boosting systems. To scale up our framework, we make necessary and efficient approximations to handle complex models and real world data sets. Our experimental results show consistent and substantial improvements on adversarial robustness.
\end{itemize}
\section{Related Works}
\label{sec:related}

\subsection{Decision Tree and Gradient Boosted Decision Tree}
Decision tree learning methods are widely used in machine learning and data mining. As considered here, the goal is to create a tree structure with each interior node corresponding to one of the input features. Each interior node has two children, and edges to child nodes represent the split condition for that feature. Each leaf provides a prediction value of the model, given that the input features satisfy the conditions represented by the path from the root to that leaf. 
In practice, 
decision tree learning algorithms are based on greedy search, which builds a tree starting from its root by making locally optimal decisions at each node. Classical decision tree training recursively chooses features, sets thresholds and splits the examples on a node by maximizing a pre-defined score, such as information gain or Gini impurity.

Decision trees are often used within ensemble methods. 
A well-known gradient tree boosting method has been developed by~\citet{friedman2000additive,friedman2001greedy} and \citet{friedman2002stochastic} to allow optimization of an arbitrary differentiable loss function. Later scalable tree boosting systems have been built to handle large datasets. For example, pGBRT~\citep{tyree2011parallel} parallelizes the training procedure by data partitioning for faster and distributed training. XGBoost~\citep{chen2016xgboost} is a prominent tree boosting software framework; in data mining contests, 17 out of 29 published winning solutions at Kaggle’s blog in 2015 used XGBoost in their models. LightGBM~\citep{ke2017lightgbm,zhang2017gpu} is another highly efficient boosting framework that utilizes histograms on data features to significantly speed up training. mGBDT~\citep{multi2018Feng} learns hierarchical representations by stacking multiple layers of gradient boosted decision trees (GBDTs).
Other variants such as extreme multi-label GBDT~\citep{si2017gradient} and cost efficient tree boosting approaches~\cite{guyon2017cost,xu2019gradient} have also been proposed recently. 

\subsection{Adversarial Attack for Deep Neural Networks}
An adversarial attack is a subtle modification of a benign example. In a successful attack, the classifier will misclassify this modified example, while the original example is correctly classified. 
Such attacks can be roughly divided into two categories, white-box attacks and black-box attacks. White-box attacks assume that the model is fully exposed to the attacker, including parameters and structures, while in black-box attacks, the attacker can query the model but has no (direct) access to any internal information inside the model. 
FGSM~\citep{goodfellow2015explaining} is one of the first methods in the white-box attack category. It computes the gradient only once to generate an adversarial example. 
This method is strengthened as Iterative-FGSM (or I-FGSM)~\citep{kurakin2017adversarial}, which applies FGSM multiple times for a higher attack success rate and smaller distortion. C\&W attack~\citep{carlini2017towards} formulates the attack as an optimization problem with an $\ell_2$ penalization.
EAD-L1 attack~\citep{chen2018ead} uses a more general formulation than C\&W attack with elastic-net regularization. To bypass some defenses with obfuscated gradients, the BPDA attack introduced by~\citet{athalye2018obfuscated} is shown to successfully circumvent many defenses.

The white-box setting is often argued as being unrealistic in the literature.  In contrast, several recent works have studied ways to fool the model given only model output scores or probabilities. Methods in~\citet{chen2017zoo} and~\citet{ilyas2017query} are able to craft adversarial examples by making queries to obtain the corresponding probability outputs of the model. A stronger and more general attack has been developed recently by~\citet{cheng2018queryefficient}, which does not rely on the gradient nor the smoothness of model output. This enables attackers to successfully attack models that only output hard labels.


\subsection{Defenses for Deep Neural Networks}
It is difficult to defend against adversarial examples, especially under strong and adaptive attacks.
Some early methods, including feature squeezing~\citep{xu2017feature} and defensive distillation~\citep{papernot2016distillation} have been proven ineffective against stronger attacks like C\&W. Many recently proposed defense methods are based on obfuscated gradients~\citep{guo2017countering, song2017pixeldefend, buckman2018thermometer, ma2018characterizing, samangouei2018defense} and are already overcome by the aforementioned BPDA attack. 

Adversarial training, first introduced in~\citet{kurakin2017adversarial}, is effective on DNNs against various attacks. In adversarial training, adversarial examples are generated during the training process and are used as training data to increase model robustness. This technique has been formally posed as a min-max robust optimization problem in~\citet{madry2017towards} and has achieved very good performance under adversarial attacks. Several recent work have tried to improve over the original adversarial training formulation~\cite{liu2018adversarial,liu2018adv,zhang2019theoretically}.
There are some other methods in the literature seeking to give provable guarantees on the robustness performance, such as distributional robust optimization~\citep{sinha2018certifying}, convex relaxations~\citep{kolter2017provable,wong2018scaling,wang2018mixtrain} and semidefinite relaxations~\citep{raghunathan2018certified}. Some of these methods can be deployed in medium-sized networks and achieve satisfactory robustness.

However, all of the current defense methods assume the model to be differentiable and use gradient based optimizers, so none of them can be directly applied to decision tree based models, which are discrete and non-differentiable.

\section{Adversarial Examples of Decision Tree Based Models}
\label{sec:robustness_issues}


Recent developments in machine learning have resulted in the deployment of large-scale tree boosting systems in critical applications such as fraud and malware detection.
Unlike deep neural networks (DNNs), tree based models are non-smooth, non-differentiable and sometimes interpretable, which might lead to the belief that they are more robust than DNNs.
However, the experiments in our paper show that similar to DNNs, tree-based models can also be easily compromised by adversarial examples. In this paper, we focus on untargeted attacks, which are considered to be successful as long as the model misclassifies the adversarial examples.

Unlike DNNs, algorithms for crafting adversarial examples for tree-based models are poorly studied. The main reason is that tree-based models are discrete and non-differentiable, thus we cannot use common gradient descent based methods for white-box attack. An early attack algorithm designed for single decision trees has been proposed by \citet{papernot2016transferability}, based on greedy search. To find an adversarial example, this method searches the neighborhood of the leaf which produces the original prediction, and finds another leaf labeled as a different class by considering the path from the original leaf to the target leaf, and changing the feature values accordingly to result in misclassification.

A white-box attack against binary classification tree ensembles has been proposed by~\citet{kantchelian2016evasion}. This method finds the exact smallest distortion (measured by some $\ell_p$ norm) necessary to mislead the model. However, the algorithm relies on Mixed Integer Linear Programming (MILP) and thus can be very time-consuming when attacking large scale tree models as arise in XGBoost. In this paper, we use the $\ell_\infty$ version of Kantchelian's attack as one of our methods to evaluate small and mid-size binary classification model robustness. \citet{kantchelian2016evasion} also introduce a faster approximation to generate adversarial examples using symbolic prediction with $\ell_0$ norm minimization and combine this method into an adversarial training approach. Unfortunately, the demonstrated adversarial training is not very effective; despite increasing model robustness for $\ell_0$ norm perturbations, robustness for $\ell_1$, $\ell_2$ and $\ell_\infty$ norm perturbations are noticeably reduced compared to the naturally (non-robustly) trained model.


\begin{figure}[ht]
\centering
\begin{tabular}{ccc}

\textbf{Original}&\thead{Adversarial of \\ nat. GBDT}&\thead{Adversarial of \\ rob. GBDT}\\
\begin{subfigure}[t]{0.08\textwidth}
\centering
  \includegraphics[width=\linewidth]{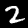}
  \caption{pred.=2~~~~~~~~~~~~~~~~~~}
\end{subfigure}~~~~~&~
\begin{subfigure}[t]{0.08\textwidth}
\centering
  \includegraphics[width=\linewidth]{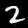}
  \caption{\tabular[t]{@{}l@{}}$\ell_\infty$ dist.$=0.069$\\pred.=8\endtabular}
\end{subfigure}~~~~~&~
\begin{subfigure}[t]{0.08\textwidth}
\centering
  \includegraphics[width=\linewidth]{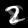}
  \caption{\tabular[t]{@{}l@{}}$\ell_\infty$ dist.$=0.344$\\pred.=8\endtabular}
\end{subfigure}
\\
\begin{subfigure}[t]{0.08\textwidth}
\centering
  \includegraphics[width=\linewidth]{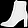}
  \caption{pred.=``Ankle Boot''}
\end{subfigure}~~~~~&~
\begin{subfigure}[t]{0.08\textwidth}
\centering
  \includegraphics[width=\linewidth]{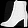}
  \caption{\tabular[t]{@{}l@{}}$\ell_\infty$ dist.$=0.074$\\pred.=``Shirt''\endtabular}
\end{subfigure}~~~~~&~
\begin{subfigure}[t]{0.08\textwidth}
\centering
  \includegraphics[width=\linewidth]{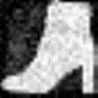}
  \caption{\tabular[t]{@{}l@{}}$\ell_\infty$ dist.$=0.394$\\pred.=``Bag''\endtabular}
\end{subfigure}
\end{tabular}
\caption{MNIST and Fashion-MNIST examples and their adversarial examples found using the untargeted attack proposed by~\citet{cheng2018queryefficient} on 200-tree gradient boosted decision tree (GBDT) models trained using XGBoost with depth=8. Natural GBDT models (nat.) are fooled by small $\ell_\infty$ perturbations (b, e), while our robust (rob.) GBDT models require much larger perturbations (c, f) for successful attacks. For both MNIST and Fashion-MNIST robust models, we use $\epsilon=0.3$ (a robust training hyper-parameter which will be introduced in Section~\ref{sec:robust_tree}).
More examples are shown in the appendix.}
\label{fig:adv_demo}
\end{figure}

In our paper, in addition to Kantchelian attacks we also use a general attack method proposed in~\citet{cheng2018queryefficient} which does not rely on the gradient nor the smoothness of output of a machine learning model. Cheng's attack method has been used to efficiently evaluate the robustness of complex models on large datasets, even under black-box settings. To deal with non-smoothness of model output, this method focuses on the distance between the benign example and the decision boundary, and reformulates the adversarial attack as a minimization problem of this distance. Despite the non-smoothness of model prediction, the distance to decision boundary is usually smooth within a local region, and can be found by binary search on vector length given a direction vector. To minimize this distance without gradient, \citet{cheng2018queryefficient} used a zeroth order optimization algorithm with a randomized gradient-free method. In our paper, we use the $\ell_\infty$ version of Cheng's attack.

Some adversarial examples obtained by this method are shown in Figure~\ref{fig:adv_demo}, where we display results on both MNIST and Fashion-MNIST datasets. The models we test are natural GBDT models trained using XGBoost and our robust GBDT models, each with 200 trees and a tree depth of 8. Cheng's attack is able to craft adversarial examples with very small distortions on natural models; for human eyes, the adversarial distortion added to the natural model's adversarial examples appear as imperceptible noise. We also conduct white-box attacks using the MILP formulation~\citep{kantchelian2016evasion}, which takes much longer time to solve but the $\ell_\infty$ distortion found by MILP is comparable to Cheng's method; see Section~\ref{sec:exp} for more details. In contrast, for our robust GBDT model, the required adversarial example distortions are so large that we can even vaguely see a number~8 in subfigure~(c). The substantial increase in the $\ell_\infty$ distortion required to misclassify as well as the increased visual impact of such distortions shows the effectiveness of our robust decision tree training, which we will introduce in detail next. In the main text, we use the $\ell_\infty$ version of Kantchelian's attack; we present results of $\ell_1$ and $\ell_2$ Kantchelian attacks in the appendix.
\section{Robust Decision Trees}
\label{sec:robust_tree}
\subsection{Intuition}
As shown in Section~\ref{sec:robustness_issues}, tree-based models are vulnerable to adversarial examples. Thus it is necessary to augment the classical natural tree training procedure in order to obtain reliable models robust against adversarial attacks. Our method formulates the process of optimally finding best split threshold in decision tree training as a robust optimization problem. As a conceptual illustration, Figure~\ref{fig:toy} presents a special case where the traditional greedy optimal splitting may yield non-robust models. A horizontal split achieving high accuracy or score on original points may be easily compromised by adversarial perturbations. On the other hand, we are able to select a better vertical split considering possible perturbations in $\ell_\infty$ balls. At a high level, the robust splitting feature and threshold take the distances between data points into account (which is often ignored in most decision tree learning algorithms) and tries to optimize the \emph{worst case} performance under adversarial perturbations. Some recent works in DNNs~\cite{Ilyas2019Adversarial,tsipras2018robustness} divided features into two categories, robust features and non-robust features. In tree-based models, the effect of this dichotomy on the robustness is straight forward, as seen in the two different splits in Figure~\ref{fig:toy} using $x^{(1)}$ (a robust feature) and $x^{(2)}$ (a non-robust feature).

\begin{figure}[tb]
\centering
  \includegraphics[width=0.9\linewidth]{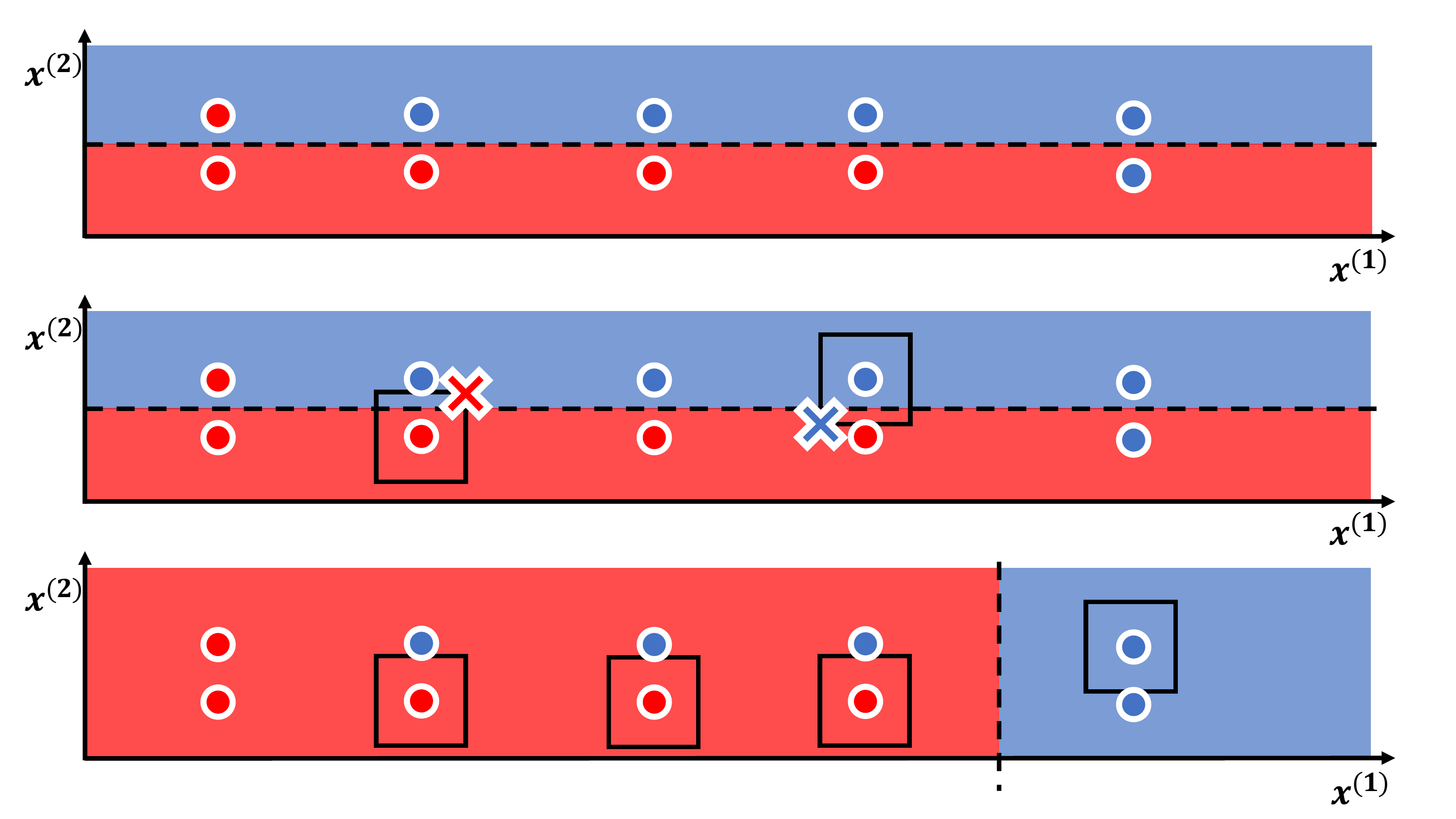}
  \caption{(Best viewed in color) A simple example illustrating how robust splitting works. Upper: A set of 10 points that can be easily separated with a horizontal split on feature $x^{(2)}$. The accuracy of this split is 0.8. Middle: The high accuracy horizontal split cannot separate the $\ell_\infty$ balls around the data points and thus an adversary can perturb any example $\mathbf{x}_i$ within the indicated $\ell_\infty$ ball to mislead the model. The worst case accuracy under adversarial perturbations is 0 if all points are perturbed within the square boxes ($\ell_\infty$ norm bounded noise). Lower: a more robust split would be a split on feature $x^{(1)}$. The accuracy of this split is 0.7 under all possible perturbations within the same size $\ell_\infty$ norm bounded noise (square boxes).}
  \label{fig:toy}
\end{figure}
\subsection{General Robust Decision Tree Framework}
\label{sec:RobustSplitting}
In this section we formally introduce our robust decision tree training framework. For a training set with $N$ examples and $d$ real valued features $\mathcal{D}=\{(\mathbf{x}_i, y_i)\}$ ($1\leq i\leq N$, $y_i\in\mathbb{R}$, $\mathbf{x}_i=[x_i^{(1)},x_i^{(2)},\dots,x_i^{(j)},\dots,x_i^{(d)}]\in\mathbb{R}^d$), we first normalize the feature values to $[0,\ 1]$ such that $\mathbf{x}_i\in[0,1]^d$ (the best feature value for split will also be scaled accordingly, but it is irrelevant to model performance). For a general decision tree based learning model, at a given node, we denote $\mathcal{I}\subseteq\mathcal{D}$ as the set of points at that node. For a split on the $j$-th feature with a threshold $\eta$, 
the sets that will be mentioned in Sections~\ref{sec:RobustSplitting}, ~\ref{sec:infogain} and~\ref{sec:xgboost} are summarized in Table~\ref{tab:notation}. 

\begin{table}[ht]
\centering
\scalebox{0.7}{
\setlength\tabcolsep{1.5pt}
\begin{tabular}{cc}
\Xhline{5\arrayrulewidth}
 Notation&   Definition   \\
 \Xhline{3\arrayrulewidth}
 $\mathcal{I}$& set of examples on the current node      \\
 $\mathcal{I}_0$&   $\mathcal{I}\cap\{(\mathbf{x}_i,y_i)|y_i=0\}$  (for classification) \\
 $\mathcal{I}_1$&   $\mathcal{I}\cap\{(\mathbf{x}_i,y_i)|y_i=1\}$  (for classification) \\
 $\mathcal{I}_L$&   $\mathcal{I}\cap\{(\mathbf{x}_i,y_i)|x^{(j)}<\eta\}$   \\
 $\mathcal{I}_R$&   $\mathcal{I}\cap\{(\mathbf{x}_i,y_i)|x^{(j)}\geq\eta\}$   \\
 $\Delta \mathcal{I}$&    $\mathcal{I}\cap\{(\mathbf{x}_i,y_i)|\eta-\epsilon\leq x^{(j)}\leq\eta+\epsilon$\}   \\
  $\Delta \mathcal{I}_L$&    $\Delta \mathcal{I}\cap\mathcal{I}_L$   \\
  $\Delta \mathcal{I}_R$&    $\Delta \mathcal{I}\cap\mathcal{I}_R$   \\
  $\mathcal{I}_L^o$&    $\mathcal{I}_L\setminus \Delta \mathcal{I}$   \\
  $\mathcal{I}_R^o$&    $\mathcal{I}_R\setminus \Delta \mathcal{I}$   \\
\Xhline{5\arrayrulewidth}
\end{tabular}
}
\caption{Notations of different sets in Section~\ref{sec:robust_tree}. We assume a split is made on the $j$-th feature with a threshold $\eta$, and this feature can be perturbed by $\pm \epsilon$.}
\label{tab:notation}
\end{table}

In classical tree based learning algorithms (which we refer to as ``natural'' trees in this paper), the quality of a split on a node can be gauged by a \emph{score function} $S(\cdot)$: a function of the splits on left and right child nodes ($\mathcal{I}_L$ and $\mathcal{I}_R$), or equivalently on the chosen feature $j$ to split and a corresponding threshold value~$\eta$. Since $\mathcal{I}_L$ and $\mathcal{I}_R$ are determined by $j$, $\eta$ and $\mathcal{I}$, we abuse the notation and define $S(j,\ \eta,\ \mathcal{I}):=S(\mathcal{I}_L,\ \mathcal{I}_R)$. 

Traditionally, people consider different scores for choosing the ``best'' split, such as information gain used by ID3~\citep{quinlan1986induction} and C4.5~\citep{quinlan1986induction}, or Gini impurity in CART~\citep{breiman2017classification}. Modern software packages~\citep{chen2016xgboost,ke2017lightgbm,dorogush2018catboost} typically find the best split that minimize a loss function directly, allowing decision trees to be used in a large class of problems (i.e., mean square error loss for regression, logistic loss for classification, and ranking loss for ranking problems).
A regular (``natural'') decision tree training process will either exactly or approximately evaluate the score function, for all possible features and split thresholds on the leaf to be split, and select the best $j,\ \eta$ pair:
\begin{equation}
    j^*,\eta^*=\argmax_{j,\ \eta}\ S(\mathcal{I}_L,\ \mathcal{I}_R)=\argmax_{j,\ \eta}\ S(j,\ \eta,\ \mathcal{I}).
\end{equation}
In our setting, we consider the case where features of examples in $\mathcal{I}_L$ and $\mathcal{I}_R$ can be perturbed by an adversary. Since a typical decision tree can only split on a single feature at one time, it is natural to consider adversarial perturbations within an $\ell_\infty$ ball of radius $\epsilon$ around each example $\mathbf{x}_i$:
\[
\mathit{B}_\epsilon^{\infty}(\mathbf{x}_i)\vcentcolon=[x_i^{(1)}-\epsilon,\ x_i^{(1)}+\epsilon]\times\dots\times[x_i^{(d)}-\epsilon,\ x_i^{(d)}+\epsilon].
\]
Such perturbations enable the adversary to minimize the score obtained by our split. So instead of finding a split with highest score, an intuitive approach for robust training is to maximize the minimum score value obtained by all possible perturbations in an $\ell_\infty$ ball with radius $\epsilon$, 
\begin{equation}
\label{eq:maxmin}
    j^*,\ \eta^*=\argmax_{j,\ \eta}\ RS(j,\ \eta,\ \mathcal{I}),
\end{equation}
where $RS(\cdot)$ is a \emph{robust score function} defined as
\begin{equation}
\begin{split}
    &RS(j,\ \eta,\  \mathcal{I})\vcentcolon=\min_{\mathcal{I}'=\{(\mathbf{x}'_i,\ y_i)\}}S(j,\ \eta,\ \mathcal{I}')\\
    &\text{s.t. }\mathbf{x}'_i\in\mathit{B}_\epsilon^{\infty}(\mathbf{x}_i) \text{, for all } \mathbf{x}'_i\in \mathcal{I}'.
\end{split}
\end{equation}
In other words, each $\mathbf{x}_i \in \mathcal{I}$ can be perturbed individually under an $\ell_\infty$ norm bounded perturbation to form a new set of training examples $\mathcal{I}'$. We consider the worst case perturbation, such that the set $\mathcal{I}'$ triggers the worst case score after split with feature $j$ and threshold $\eta$. The training objective~\eqref{eq:maxmin} becomes a max-min optimization problem.

Note that there is an intrinsic consistency between boundaries of the $\ell_\infty$ balls and the decision boundary of a decision tree. For the split on the $j$-th feature, perturbations along features other than $j$ do not affect the split. So we only need to consider perturbations within $\pm \epsilon$ along the $j$-th feature. We define $\Delta\mathcal{I}$ as the \emph{ambiguity set}, containing examples with feature $j$ inside the $[\eta-\epsilon, \eta+\epsilon]$ region (see Table~\ref{tab:notation}).
Only examples in $\Delta\mathcal{I}$ may be perturbed from $\mathcal{I}_L$ to $\mathcal{I}_R$ or from $\mathcal{I}_R$ to $\mathcal{I}_L$ to reduce the score. Perturbing points in $\mathcal{I}\setminus\Delta\mathcal{I}$ will not change the score or the leaves they are assigned to. We denote $\mathcal{I}_L^o$ and $ \mathcal{I}_R^o$ as the set of examples that are certainly on the left and right child leaves under perturbations (see Table~\ref{tab:notation} for definitions). Then we introduce 0-1 variables $s_i=\{0,1\}$ denoting an example in the ambiguity set $\Delta\mathcal{I}$ to be assigned to $\mathcal{I}_L$ and $\mathcal{I}_R$, respectively. Then the $RS$ can be formulated as a 0-1 integer optimization problem with $|\Delta\mathcal{I}|$ variables, which is NP-hard in general. 
Additionally, we need to scan through all $d$ features of all examples and solve $O(|\mathcal{I}|d)$ minimization problems for a single split at a single node. This large number of problems to solve makes this computation intractable. Therefore, we need to find an approximation for the $RS(j,\ \eta,\  \mathcal{I})$. In Sections~\ref{sec:infogain} and~\ref{sec:xgboost}, we present two different approximations and corresponding implementations of our robust decision tree framework, first for classical decision trees with information gain score, and then for modern tree boosting systems which can minimize any loss function.

It is worth mentioning that we normalize features to $[0,\ 1]^d$ for the sake of simplicity in this paper. One can also define $\epsilon_1,\ \epsilon_2,\ \dots,\ \epsilon_d$ for each feature and then the adversary is allowed to perturb $\mathbf{x}_i$ within $[x_i^{(1)}-\epsilon_1,\ x_i^{(1)}+\epsilon_1]\times\dots\times[x_i^{(d)}-\epsilon_d,\ x_i^{(d)}+\epsilon_d]$. In this case, we would not need to normalize the features. Also, $\epsilon$ is a hyper-parameter in our robust model. Models trained with larger $\epsilon$ are expected to be more robust and when $\epsilon=0$, the robust model is the same as a natural model.
\subsection{Robust Splitting for Decision Trees with Information Gain Score}
\label{sec:infogain}

Here we consider a decision tree for binary classification, $y_i\in\{0,1\}$, with information gain as the metric for node splitting. The information gain score is
\begin{equation*}
S(j,\ \eta,\ \mathcal{I})\vcentcolon=IG(j,\ \eta)=H(y)-H(y|x^{(j)}<\eta),
\end{equation*}
where $H(\cdot)$ and $H(\cdot|\cdot)$ are entropy and conditional entropy on the empirical distribution. For simplicity, we denote $N_0\vcentcolon=|\mathcal{I}_0|$, $N_1\vcentcolon=|\mathcal{I}_1|$, $n_0\vcentcolon=|\mathcal{I}_L\cap \mathcal{I}_0|$ and $n_1\vcentcolon=|\mathcal{I}_L\cap \mathcal{I}_1|$. The following theorem shows adversary's perturbation direction to minimize the information gain.
\begin{theorem}
\label{thm}
If $\frac{n_0}{N_0}<\frac{n_1}{N_1}$ and $\frac{n_0+1}{N_0}\leq\frac{n_1}{N_1}$, perturbing one example in $\Delta \mathcal{I}_R$ with label 0 to $\mathcal{I}_L$ will decrease the information gain.
\end{theorem}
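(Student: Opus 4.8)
The plan is to track how the information gain responds to moving a single label-$0$ point from the right child into the left child. Because the parent entropy $H(y)$ in $IG(j,\eta)=H(y)-H(y\mid x^{(j)}<\eta)$ is unaffected by the split, decreasing $IG$ is equivalent to \emph{increasing} the conditional entropy $H(y\mid x^{(j)}<\eta)$, which I read as the size-weighted average of the two child entropies. Writing $N=N_0+N_1$ and $\phi(t)=t\log t$ (with $\phi(0)=0$, and taking natural logarithms without loss of generality since changing the base only rescales $IG$ by a positive constant), I would first record that a child of size $m$ holding $a$ label-$0$ points contributes $-\phi(a)-\phi(m-a)+\phi(m)$ to $N\cdot H(y\mid x^{(j)}<\eta)$. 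Summing the two children gives the closed form
\[
N\cdot H(y\mid x^{(j)}<\eta)=-\phi(n_0)-\phi(n_1)+\phi(n_0+n_1)-\phi(N_0-n_0)-\phi(N_1-n_1)+\phi\bigl((N_0-n_0)+(N_1-n_1)\bigr).
\]
The perturbation sends $n_0\mapsto n_0+1$ and $N_0-n_0\mapsto N_0-n_0-1$ while fixing $n_1$ and $N_1-n_1$, so only the $n_0$-dependent terms change.

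Rather than wrestle with finite differences, I would relax $n_0$ to a real variable $x\in[n_0,n_0+1]$, keeping the right-child label-$0$ count equal to $N_0-x$, and regard the conditional entropy as a smooth function $G(x)$. The effect of the move is then exactly $G(n_0+1)-G(n_0)=\int_{n_0}^{n_0+1}G'(x)\,dx$, so it suffices to control the sign of $G'$ on this interval. Differentiating with $\phi'(t)=\log t+1$, all the additive constants cancel and I expect to obtain
\[
N\,G'(x)=\log\frac{(x+n_1)\,(N_0-x)}{x\,(N_0-x+N_1-n_1)}.
\]

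The decisive step is the sign analysis of this logarithm. Clearing denominators, $G'(x)>0$ is equivalent to $(x+n_1)(N_0-x)>x(N_0-x+N_1-n_1)$; expanding both sides, the quadratic terms $xN_0-x^2$ cancel and, using $n_1+(N_1-n_1)=N_1$, the inequality collapses to the clean condition $n_1N_0>xN_1$, i.e. $\tfrac{x}{N_0}<\tfrac{n_1}{N_1}$. This is the crux: the continuous derivative is positive precisely on $x<N_0\,n_1/N_1$. The hypothesis $\tfrac{n_0+1}{N_0}\le\tfrac{n_1}{N_1}$ places the entire interval $[n_0,n_0+1]$ in this region (with possible equality only at the right endpoint), while $\tfrac{n_0}{N_0}<\tfrac{n_1}{N_1}$ makes $G'$ strictly positive on $[n_0,n_0+1)$; hence $\int_{n_0}^{n_0+1}G'(x)\,dx>0$, the conditional entropy strictly increases, and the information gain strictly decreases, as claimed.

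The main obstacle I anticipate is bookkeeping rather than any deep idea: checking that the relaxation is legitimate at the endpoints, where arguments of $\phi$ may vanish and $G'$ carries an \emph{integrable} logarithmic singularity, so the fundamental theorem of calculus still applies to the continuous $G$; and confirming that the existence of the point being moved forces $N_0-n_0\ge 1$, so the right child is non-degenerate before the move and the formula for $G'$ is valid on the interval. As a cross-check I would also carry out a fully discrete version, writing the forward difference $\delta\phi(t)=\phi(t+1)-\phi(t)=\int_0^1\bigl(\log(t+u)+1\bigr)\,du$ and grouping terms; this avoids the relaxation entirely and should reproduce the same inequality $n_1N_0>n_0N_1$.
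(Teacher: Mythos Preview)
Your proposal is correct and follows essentially the same approach as the paper: relax $n_0$ to a continuous variable, differentiate, and observe that the sign of the derivative is governed exactly by the condition $\tfrac{n_0}{N_0}<\tfrac{n_1}{N_1}$; the paper computes $\partial IG/\partial n_0 = C_1\log\bigl(1+\tfrac{n_0N_1-N_0n_1}{(N_0-n_0)(n_0+n_1)}\bigr)$, which is algebraically the negative of your $G'$. You are in fact more careful than the paper, which stops at the sign of the derivative, whereas you integrate over $[n_0,n_0+1]$ and invoke the second hypothesis $\tfrac{n_0+1}{N_0}\le\tfrac{n_1}{N_1}$ to ensure monotonicity on the whole interval and thus a genuine discrete decrease.
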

Similarly, if $\frac{n_1}{N_1}<\frac{n_0}{N_0}$ and $\frac{n_1+1}{N_1}\leq\frac{n_0}{N_0}$, perturbing one example in $\Delta \mathcal{I}_R$ with label 1 to $\mathcal{I}_L$ will decrease the information gain. The proof of this theorem will be presented in Section~\ref{sec:proof} in the appendix. Note that we also have a similar conclusion for Gini impurity score, which will be shown in Section~\ref{sec:gini} in the appendix.  Therefore, to decrease the information gain score, the adversary needs to perturb examples in $\Delta\mathcal{I}$ such that $\frac{n_0}{N_0}$ and $\frac{n_1}{N_1}$ are close to each other (the ideal case $\frac{n_0}{N_0} = \frac{n_1}{N_1}$ may not be achieved because $n_0$, $n_1$, $N_0$ and $N_1$ are integers). The robust split finding algorithm is shown in Algorithm~\ref{alg:robust_infogain_split}. In this algorithm we find a perturbation that minimizes $\left|\frac{n_0}{N_0}-\frac{n_1}{N_1}\right|$ as an approximation and upper bound to the optimal solution. Algorithm~\ref{alg:split_min} in Section~\ref{sec:proof} in the appendix shows an $O(|\mathcal{I}|)$ procedure to find such perturbation to approximately minimize the information gain. Since the algorithm scans through $\{x^{(j)}_1, \dots, x^{(j)}_d\}$ in the sorted order, the sets $\Delta \mathcal{I}$, $\mathcal{I}^o_L, \mathcal{I}^o_R$ can be maintained in amortized $O(1)$ time in the inner loop.  Therefore, the computational complexity of the robust training algorithm is $O(d|\mathcal{I}|^2)$ per split. 

Although it is possible to extend our conclusion to other traditional scores of classification trees, we will focus on the modern scenario where we use a regression tree to fit any loss function in Section~\ref{sec:xgboost}.

\begin{algorithm}[tb]
\caption{Robust Split with Information Gain}
\label{alg:robust_infogain_split}
\begin{algorithmic}
\STATE {\bfseries Input:} Training set $\{(x_i,\ y_i)\}|_{i=1}^N$, $x_i\in [0,1]^d$, $y_i\in\{0,1\}$.
\STATE {\bfseries Input:} The instance set of the current node $I$. 
\STATE {\bfseries Input:} $\epsilon$, the radius of the $\ell_\infty$ ball.
\STATE {\bfseries Output:} Optimal split of the current node.
    \STATE$\mathcal{I}_0\leftarrow \{(x_i,\ y_i)|y_i=0\},\mathcal{I}_1\leftarrow \{(x_i,\ y_i)|y_i=1\}$;
    \STATE$N_0\leftarrow |\mathcal{I}\cap \mathcal{I}_0|,\ N_1\leftarrow|\mathcal{I}\cap \mathcal{I}_1|$;
    \FOR{$j\leftarrow 1$ {\bfseries to} $d$}
        \FOR{$m$ in sorted($\mathcal{I}$, ascending order by $x_m^j$)}
            \STATE$\eta\leftarrow \frac{1}{2}(x_m^j+x_{m+1}^j)$, $\Delta\mathcal{I}\leftarrow \mathcal{I}\cap\{(x_i,\ y_i)|\eta-\epsilon\leq x^{(j)}\leq\eta+\epsilon\}$;
            \STATE$\mathcal{I}_L^o\leftarrow\{(x_i,\ y_i)|x^{(j)}<\eta-\epsilon\}, \mathcal{I}_R^o\leftarrow\{(x_i,\ y_i)|x^{(j)}>\eta+\epsilon\}$;
            \STATE$n_0^o\leftarrow|\mathcal{I}_L^o\cap \mathcal{I}_0|$, $n_1^o\leftarrow|\mathcal{I}_L^o\cap \mathcal{I}_1|$;
            \STATE Find $\Delta n_0^*$, $\Delta n_1^*$ to minimize $|\frac{\Delta n_0^*+n_0^o}{N_0}-\frac{\Delta n_1^*+n_1^o}{N_1}|$ using Algorithm~\ref{alg:split_min} in Section~\ref{sec:proof} in the appendix;
            \STATE From $\Delta\mathcal{I}$, add $\Delta n_0^*$ points with $y=0$ and $\Delta n_1^*$ points with $y=1$ to $\mathcal{I}_L^o$ and obtain $\mathcal{I}_L$;
            \STATE Add remaining points in $\Delta \mathcal{I}$ to $\mathcal{I}_R^o$ and obtain $\mathcal{I}_R$;
            \STATE$RS(j,\ \eta)\leftarrow IG(\mathcal{I}_L,\mathcal{I}_R)$;
        \ENDFOR
    \ENDFOR
    \STATE$j^*,\eta^*\leftarrow\argmax_{j,\ \eta} RS(j,\ \eta)$;
    \STATE Split on feature $j^*$ with a threshold $\eta^*$;
    \end{algorithmic}
\end{algorithm}


\subsection{Robust Splitting for GBDT models}
\label{sec:xgboost}

\begin{algorithm}
\caption{Robust Split for Boosted Tree}
\label{alg:robust_xgboost_split}
\begin{algorithmic}
\STATE {\bfseries Input:} training set $\{(x_i,\ y_i)\}|_{i=1}^N$, $x_i\in [0,1]^d$, $y_i\in\mathbb{R}$.
\STATE {\bfseries Input:} The instance set of the current node $I$.
\STATE {\bfseries Input:} $\epsilon$, the radius of the $\ell_\infty$ ball.
\STATE {\bfseries Output:} Optimal split of the current node.
    \FOR{$j\leftarrow 1$ {\bfseries to} $d$}
        \FOR{$m$ in sorted($\mathcal{I}$, ascending order by $x_m^j$)}
            \STATE$\eta\leftarrow \frac{1}{2}(x_m^j+x_{m+1}^j)$; \STATE$\mathcal{I}_L^o\leftarrow\{(x_i,\ y_i)|x^{(j)}<\eta-\epsilon\}, \Delta \mathcal{I}_L\leftarrow \mathcal{I}\cap\{(x_i,\ y_i)|\eta-\epsilon\leq x^{(j)}<\eta\}$;
            \STATE$\mathcal{I}_R^o\leftarrow\{(x_i,\ y_i)|x^{(j)}>\eta+\epsilon\}$,  $\Delta \mathcal{I}_R\leftarrow \mathcal{I}\cap\{(x_i,\ y_i)|\eta\leq x^{(j)}\leq\eta+\epsilon\}$;
            \STATE $S_1=S(\mathcal{I}_L,\mathcal{I}_R)$,\  $S_2=S(\mathcal{I}_L^o,\ \mathcal{I}_R^o\cup\Delta \mathcal{I})$,\ $S_3=S(\mathcal{I}_L^o\cup\Delta \mathcal{I},\mathcal{I}_R^o)$, $S_4=S(\mathcal{I}_L^o\cup\Delta \mathcal{I}_R, \mathcal{I}_R^o\cup\Delta \mathcal{I}_L)$;
            \STATE $RS(j,\ \eta)\leftarrow \min\{S_1,S_2,S_3,S_4\};$
        \ENDFOR
    \ENDFOR
    \STATE$j^*,\eta^*\leftarrow\argmax_{j,\ \eta} RS(j,\ \eta)$;
    \STATE Split on feature $j^*$ with a threshold $\eta^*$;
  \end{algorithmic}
\end{algorithm}

We now introduce the regression tree training process used in many modern tree boosting packages including XGBoost~\citep{chen2016xgboost}, LightGBM~\citep{ke2017lightgbm} and CatBoost~\citep{dorogush2018catboost}.
Specifically, we focus on the formulation of gradient boosted decision tree (GBDT), which is one of the most successful ensemble models and has been widely used in industry. GBDT is an additive tree ensemble model $\phi(\cdot)$ combining outputs of $K$ trees 
\[
\hat{y}_i=\phi_K(\mathbf{x}_i)=\sum_{k=1}^K f_k(\mathbf{x}_i)
\]
where each $f_k$ is a decision tree and $\hat{y}_i$ is the final output for $\mathbf{x}_i$. 
Here we only focus on regression trees where $\hat{y}_i \in \mathbb{R}$.
Note that even for a classification problem, the modern treatment in GBDT is to consider the data with logistic loss, and use a regression tree to minimize this loss.

During GBDT training, the trees $f_k$ are generated in an additive manner: when we consider the tree $f_K$, all previous trees $f_k, k \in \{1, \cdots, K-1\}$ are kept unchanged.
For a general convex loss function $l$ (such as MSE or logistic loss), we desire to minimize the following objective
\begin{equation*}
\begin{split}
    &\mathcal{L}(\phi, \mathcal{D})=\sum_{i=1}^N l(y_i,\hat{y}_i)+\sum_{k=1}^K\Omega(f_k) \\&=\sum_{i=1}^N l\left(y_i, \phi_{K-1}(\mathbf{x}_i) + f_K(\mathbf{x}_i)\right)+\sum_{k=1}^{K-1}\Omega(f_k) + \Omega(f_K)
\end{split}
\label{eq:xgboost_loss}
\end{equation*}
where $\Omega(f)$ is a regularization term to penalize complex trees; for example, in XGBoost, $\Omega(f)=\gamma T+\frac{1}{2}\lambda\|\mathbf{\omega}\|^2$, where $T$ is the number of leaves, $\mathbf{\omega}$ is a vector of all leaf predictions and $\lambda,\ \gamma\geq 0$ are regularization constants.
Importantly, when we consider $f_K$, $\phi_{K-1}$ is a constant. The impact of $f_K(\mathbf{x}_i)$ on $l(y_i, \hat{y}_i)$ can be approximated using a second order Taylor expansion:
\begin{equation*}
\begin{split}
  &l(y_i, \phi_{K}(\mathbf{x}_i)) \approx \hat{l}(y_i, \phi_{K}(\mathbf{x}_i)) \\&:= l(y_i, \phi_{K-1}(\mathbf{x}_i)) + g_i f_K(\mathbf{x}_i) + \frac{1}{2} h_i (f_K(\mathbf{x}_i))^2
\end{split}
\end{equation*}
where $g_i=\frac{\partial l(y_i, \phi_{K}(\mathbf{x}_i))}{\partial f_K(\mathbf{x}_i)}$ and
$h_i=\frac{\partial^2 l(y_i, \phi_{K}(\mathbf{x}_i))}{\partial f_K^2(\mathbf{x}_i)}$ are the first and second order derivatives on the loss function with respect to the prediction of decision tree $f_K$ on point $\mathbf{x}_i$. Conceptually, ignoring the regularization terms, the score function can be given as:
\begin{equation*}
\begin{split}
  &S(\mathcal{I}_L, \mathcal{I}_R) = \sum_{i\in \mathcal{I}_L} \hat{l}(y_i, \phi_{K}(\mathbf{x}_i)) \rvert_{\phi_{K}(\mathbf{x}_i)=\omega_L} \\& +\! \sum_{i\in \mathcal{I}_R}\! \hat{l}(y_i, \phi_{K}(\mathbf{x}_i)) \rvert_{\phi_{K}(\mathbf{x}_i)=\omega_R} \!-\! \sum_{i\in \mathcal{I}} \!\hat{l}(y_i, \phi_{K}(\mathbf{x}_i)) \rvert_{\phi_{K}(\mathbf{x}_i)=\omega_P}
\end{split}
\end{equation*}
where $\omega_L$, $\omega_R$ and $\omega_P$ are the prediction values of the left, right and parent nodes. The score represents the improvements on reducing the loss function $\mathcal{L}$ for all data examples in $\mathcal{I}$.
The exact form of score used in XGBoost with regularization terms is given in~\cite{chen2016xgboost}:
\begin{equation*}
\begin{split}
    &S(j,\ \eta,\ \mathcal{I})=S(\mathcal{I}_L,\ \mathcal{I}_R) \\&\!:=\!
    \frac{1}{2}\bigg[\frac{(\sum_{i\in \mathcal{I}_L}g_i)^2}{\sum_{i\in \mathcal{I}_L}h_i+\lambda}\!+\!\frac{(\sum_{i\in \mathcal{I}_R}g_i)^2}{\sum_{i\in \mathcal{I}_R}h_i+\lambda}\!-\!\frac{(\sum_{i\in \mathcal{I}}g_i)^2}{\sum_{i\in \mathcal{I}}h_i+\lambda}\bigg]-\gamma,
    \label{eq:xgboost_split_score}
\end{split}
\end{equation*}
where $\gamma$ is a regularization constant. Again, to minimize the score by perturbing points in $\Delta\mathcal{I}$, the adversary needs to solve an intractable 0-1 integer optimization at each possible splitting position. Since GBDT is often deployed in large scale data mining tasks with a large amount of training data to scan through at each node, and we need to solve $RS$ $O(|\mathcal{I}|d)$ times, we cannot afford any expensive computation. For efficiency, our robust splitting procedure for boosted decision trees, as detailed in Algorithm~\ref{alg:robust_xgboost_split}, approximates the minimization by considering only four representative cases:
(1) no perturbations: $S_1=S(\mathcal{I}_L,\ \mathcal{I}_R)$;
(2) perturb all points in $\Delta\mathcal{I}$ to the right: $S_2=S(\mathcal{I}_L^o, \mathcal{I}_R^o\cup\Delta \mathcal{I})$;
(3) perturb all points in $\Delta\mathcal{I}$ to the left: $S_3=S(\mathcal{I}_L^o\cup\Delta \mathcal{I},\mathcal{I}_R^o)$;
(4) swap the points in $\Delta\mathcal{I}$: $S_4=S(\mathcal{I}_L^o\cup\Delta \mathcal{I}_R, \mathcal{I}_R^o\cup\Delta \mathcal{I}_L)$.
We take the minimum among the four representative cases as an approximation of the $RS$: 
\begin{equation}
RS(j,\ \eta,\ \mathcal{I})\approx \min\{S_1,\ S_2,\ S_3,\ S_4\}.
\end{equation}Though this method only takes $O(1)$ time to give a rough approximation of the $RS$ at each possible split position, it is effective empirically as demonstrated next in Section~\ref{sec:exp}.

\section{Experiments}
\label{sec:exp}
Our code is at \url{https://github.com/chenhongge/RobustTrees}.\vspace{-3mm}
\subsection{Robust Information Gain Decision Trees}
\label{sec:info_gain_exp}
We present results on three small datasets with robust information gain based decision trees using Algorithm~\ref{alg:robust_infogain_split}. We focus on untargeted adversarial attacks. For each dataset we test on 100 examples (or the whole test set), and we only attack correctly classified images. Attacks proceed until the attack success rate is 100\%; the differences in robustness are reflected in the distortion of the adversarial examples required to achieve a successful attack. In Table~\ref{tab:vanilla_robust_acc_table}, we present the average $\ell_\infty$ distortion of the adversarial examples of both classical natural decision trees and our robust decision trees trained on different datasets. We use Papernot's attack as well as $\ell_\infty$ versions of Cheng's and Kantchelian's attacks. The $\ell_1$ and $\ell_2$ distortion found by Kantchelian's $\ell_1$ and $\ell_2$ attacks are presented in Table~\ref{tab:vanilla_robust_acc_table_l1l2} in the appendix. The adversarial examples found by Cheng's, Papernot's and Kantchelian's attacks have much larger $\ell_\infty$ norm for our robust trees compared to those for the natural trees, demonstrating that our robust training algorithm improves the decision tree robustness substantially. In some cases our robust decision trees also have higher test accuracy than the natural trees. This may be due to the fact that the robust score tends to encourage the tree to split at thresholds where fewer examples are in the ambiguity set, and thus the split is also robust against random noise in the training set. Another possible reason is the implicit regularization in the robust splitting. The robust score is always lower than the regular score and thus our splitting is more conservative. Also, from results in Table~\ref{tab:vanilla_robust_acc_table} we see that most of the adversarial examples found by Papernot's attack have larger $\ell_\infty$ norm than those found by Cheng's $\ell_\infty$ attack. This suggests that the straight-forward greedy search attack is not as good as a sophisticated general attack for attacking decision trees. Cheng's attack is able to achieve similar $\ell_\infty$ distortion as Kantchelian's attack, without solving expensive MILPs. While not scalable to large datasets, Kantchelian's attack can find the \emph{minimum} adversarial examples, reflecting the true robustness of a tree-based model.

\begin{table*}[htb]
\begin{center}
\scalebox{0.6}{
\setlength\tabcolsep{1.5pt}
\begin{tabular}{c|c|c|c|c|c|c|c|c|c|c|c|c|c|c|c}

\Xhline{5\arrayrulewidth}
    
    \multirow{2}*{Dataset}&training&test&\# of&\# of &\multirow{2}*{robust $\epsilon$}& \multicolumn{2}{c|}{depth}&\multicolumn{2}{c|}{test acc.}&\multicolumn{2}{c|}{\thead{avg. $\ell_\infty$ dist.\\by Cheng's $\ell_\infty$ attack}}&\multicolumn{2}{c|}{\thead{avg. $\ell_\infty$ dist.\\by Papernot's attack}}&\multicolumn{2}{c}{\thead{avg. $\ell_\infty$ dist.\\by Kantchelian's $\ell_\infty$ attack}}\\
    &set size&set size&features&classes&&robust&natural&robust&natural&robust&natural&robust&natural&robust&natural\\\Xhline{3\arrayrulewidth}
    
    breast-cancer &546&137&10&2&0.3&5&5&.948&.942&\textbf{.531}&.189&\textbf{.501}&.368&\textbf{.463}&.173\\
    
    diabetes&614&154&8&2&0.2&5&5&.688&.747&\textbf{.206}&.065&\textbf{.397}&.206&\textbf{.203}&.060\\
    
    ionosphere&281&70&34&2&0.2&4&4&.986&.929&\textbf{.388}&.109&\textbf{.408}&.113&\textbf{.358}&.096\\
    
    \Xhline{5\arrayrulewidth}
    
\end{tabular}
}
\caption{Test accuracy and robustness of information gain based single decision tree model. The robustness is evaluated by the average $\ell_\infty$ distortion of adversarial examples found by Cheng's, Papernot's and Kantchelian's attacks. Average $\ell_\infty$ distortion of robust decision tree models found by three attack methods are consistently larger than that of the naturally trained ones.}
\label{tab:vanilla_robust_acc_table}
\end{center}
\vspace{-4mm}
\end{table*}

\subsection{Robust GBDT Models}
\label{sec:xgboost_exp}
In this subsection, we evaluate our algorithm in the tree boosting setting, where multiple robust decision trees are created in an ensemble to improve model accuracy. We implement Algorithm~\ref{alg:robust_xgboost_split} by slightly modifying the node splitting procedure in XGBoost. Our modification is only relevant to computing the scores for selecting the best split, and is compatible with other existing features of XGBoost. We also use XGBoost to train natural (undefended) GBDT models. Again, we focus on untargeted adversarial attacks. We consider nine real world large or medium sized datasets and two small datasets~\cite{CC01a}, spanning a variety of data types (including both tabular and image data). For small datasets we use 100 examples and for large or medium sized datasets, we use 5000 examples for robustness evaluation, except for MNIST 2 vs. 6, where we use 100 examples. MNIST 2 vs. 6 is a subset of MNIST to only distinguish between 2 and 6. This is the dataset tested in~\citet{kantchelian2016evasion}. We use the same number of trees, depth and step size shrinkage as in~\citet{kantchelian2016evasion} to train our robust and natural models. Same as~\citet{kantchelian2016evasion}, we only test 100 examples for MNIST 2 vs. 6 since the model is relatively large. In Table~\ref{tab:small_robust_acc_table}, we present the average $\ell_\infty$ distortion of  adversarial examples found by Cheng's $\ell_\infty$ attack for both natural GBDT and robust GBDT models trained on those datasets. For small and medium binary classification models, we also present  results of Kantchelian's $\ell_\infty$ attack, which finds the \emph{minimum} adversarial example in $\ell_\infty$ norm. The $\ell_1$ and $\ell_2$ distortion found by Kantchelian's $\ell_1$ and $\ell_2$ attacks are presented in Table~\ref{tab:small_robust_acc_table_l1l2} in the appendix. Kantchelian's attack can only handle binary classification problems and small scale models due to its time-consuming MILP formulation. Papernot's attack is inapplicable here because it is for attacking a single tree only. The natural and robust models have the same number of trees for comparison. We only attack correctly classified images and all examples are successfully attacked. We see that our robust GBDT models consistently outperform the natural GBDT models in terms of $\ell_\infty$ robustness. 

For some datasets, we need to increase tree depth in robust GBDT models in order to obtain accuracy comparable to the natural GBDT models. The requirement of larger model capacity is common in the adversarial training literature: in the state-of-the-art defense for DNNs, \citet{madry2017towards} argues that increasing the model capacity is essential for adversarial training to obtain good accuracy. 

\begin{table*}[htb]
\begin{center}
\scalebox{0.62}{
\setlength\tabcolsep{1.5pt}
\begin{tabular}{c|c|c|c|c|c|c|c|c|c|c|c|c|c|c|c|c}

\Xhline{5\arrayrulewidth}
    
    \multirow{2}*{Dataset}&training&test&\# of&\# of &\# of &robust& \multicolumn{2}{c|}{depth}&\multicolumn{2}{c|}{test acc.}&\multicolumn{2}{c|}{\thead{avg. $\ell_\infty$ dist.\\by Cheng's $\ell_\infty$ attack}}&dist.&\multicolumn{2}{c|}{\thead{avg. $\ell_\infty$ dist.\\by Kantchelian's $\ell_\infty$ attack}}&dist.\\
    &set size&set size&features&classes&trees& $\epsilon$&robust&natural&robust&natural&robust&natural&improv.&robust&natural&improv.\\\Xhline{3\arrayrulewidth}
    
    breast-cancer &546&137&10&2&4&0.3&8&6&.978&.964&\textbf{.411}&.215&\textbf{1.91X}&\textbf{.406}&.201&\textbf{2.02X}\\
    covtype&400,000&181,000&54&7&80&0.2&8&8&.847&.877&\textbf{.081}&.061&\textbf{1.31X}&not binary&not binary&---\\
    cod-rna&59,535&271,617&8&2&80&0.2&5&4&.880&.965&\textbf{.062}&.053&\textbf{1.16X}&\textbf{.054}&.034&\textbf{1.59X}\\
    diabetes&614&154&8&2&20&0.2&5&5&.786&.773&\textbf{.139}&.060&\textbf{2.32X}&\textbf{.114}&.047&\textbf{2.42X}\\
    Fashion-MNIST&60,000&10,000&784&10&200&0.1&8&8&.903&.903&\textbf{.156}&.049&\textbf{3.18X}&not binary&not binary&---\\
    HIGGS&10,500,000&500,000&28&2&300&0.05&8&8&.709&.760&\textbf{.022}&.014&\textbf{1.57X}&time out& time out&---\\
    ijcnn1&49,990&91,701&22&2&60&0.1&8&8&.959&.980&\textbf{.054}&.047&\textbf{1.15X}&\textbf{.037}&.031&\textbf{1.19X}\\
    MNIST&60,000&10,000&784&10&200&0.3&8&8&.980&.980&\textbf{.373}&.072&\textbf{5.18X}&not binary&not binary&---\\
    Sensorless&48,509&10,000&48&11&30&0.05&6&6&.987&.997&\textbf{.035}&.023&\textbf{1.52X}&not binary&not binary&---\\
    webspam&300,000&50,000&254&2&100&0.05&8&8&.983&.992&\textbf{.049}&.024&\textbf{2.04X}&time out&time out&---\\
    MNIST 2 vs. 6&11,876&1,990&784&2&1000&0.3&6&4&.997&.998&\textbf{.406}&.168&\textbf{2.42X}&\textbf{.315}&.064&\textbf{4.92X}\\
    \Xhline{5\arrayrulewidth}

\end{tabular}
}
\caption{The test accuracy and robustness of GBDT models. Average $\ell_\infty$ distortion of our robust GBDT models are consistently larger than those of the naturally trained models. The robustness is evaluated by the average $\ell_\infty$ distortion of adversarial examples found by Cheng's and Kantchelian's attacks. Only small or medium sized binary classification models can be evaluated by Kantchelian's attack, but it finds the minimum adversarial example with smallest possible distortion.}
\label{tab:small_robust_acc_table}
\end{center}
\vspace{-4mm}
\end{table*}

Figure~\ref{fig:robustness_vs_acc_xgboost_mnist} and Figure~\ref{fig:robustness_vs_acc_xgboost_fashion} in the appendix show the distortion and accuracy of MNIST and Fashion-MNIST models with different number of trees. The adversarial examples are found by Cheng's $\ell_\infty$ attack. Models with $k$ trees are the first $k$ trees during a single boosting run of $K$ ($K \geq k$) trees. The $\ell_\infty$ distortion of robust models are consistently much larger than those of the natural models. 
For MNIST dataset, our robust GBDT model loses accuracy slightly when the model has only 20 trees. This loss is gradually compensated as more trees are added to the model; regardless of the number of trees in the model, the robustness improvement is consistently observed, as our robust training is embedded in each tree's building process and we create robust trees beginning from the very first step of boosting. Adversarial training in~\citet{kantchelian2016evasion}, in contrast, adds adversarial examples with respect to the current model at each boosting round so adversarial examples produced in the later stages of boosting are only learned by part of the model. The non-robust trees in the first few rounds of boosting still exist in the final model and they may be the weakness of the ensemble. Similar problems are not present in DNN adversarial training since the whole model is exposed to new adversarial examples throughout the training process. This may explain why adversarial training in~\citet{kantchelian2016evasion} failed to improve $\ell_1$, $\ell_2$, or $\ell_\infty$ robustness on the MNIST 2 vs. 6 model, while our method achieves significant robustness improvement with the same training parameters and evaluation metrics, as shown in Tables~\ref{tab:small_robust_acc_table} and~\ref{tab:small_robust_acc_table_l1l2}. Additionally, we also evaluate the robustness of natural and robust models with different number of trees on a variety of datasets using Cheng's $\ell_\infty$ attack, presented in Table~\ref{tab:big_robust_acc_table} in the appendix.

\begin{figure}[htb]\centering
\includegraphics[width=0.9\linewidth]{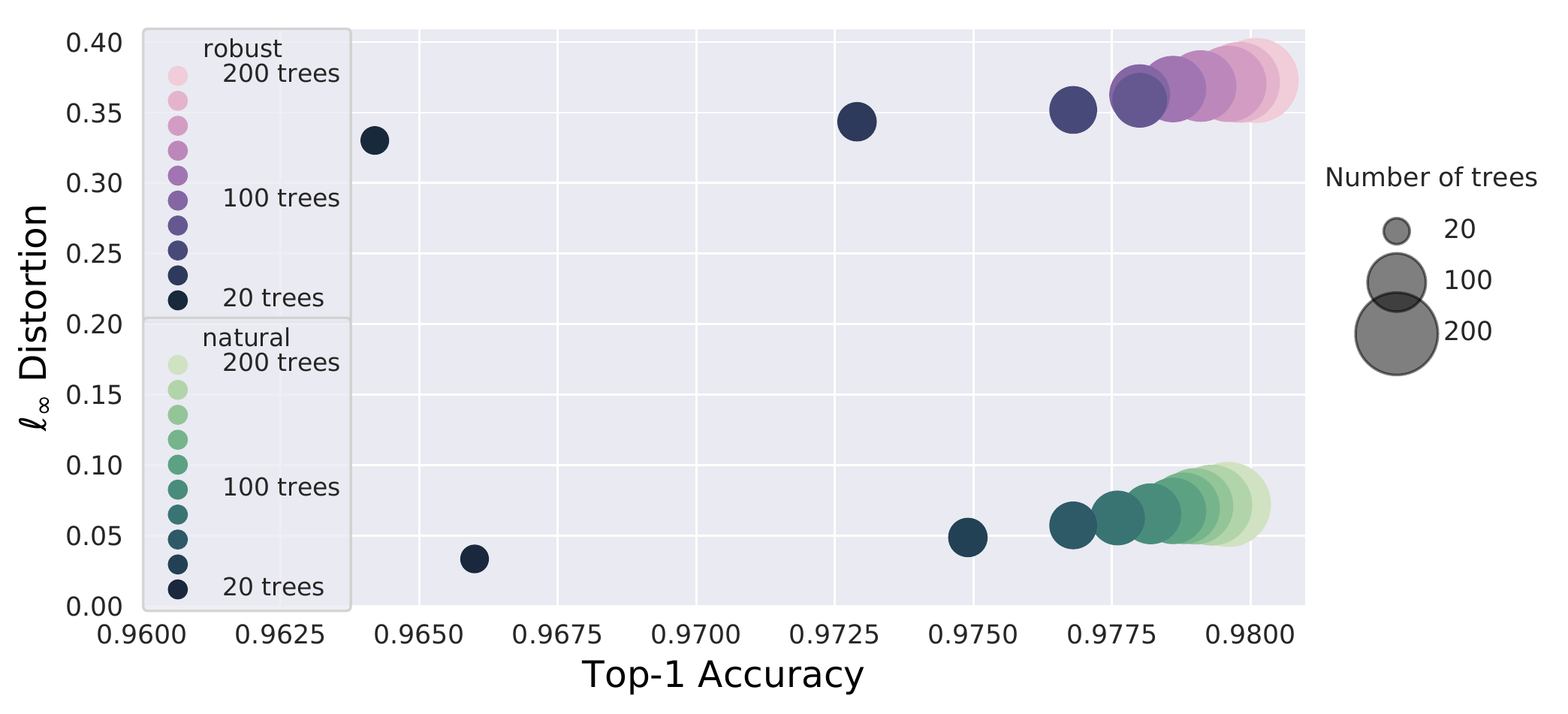}
  
  \caption{(Best viewed in color) $\ell_\infty$ distortion vs. classification accuracy of GBDT models on MNIST dataset with different numbers of trees (circle size). The adversarial examples are found by Cheng's $\ell_\infty$ attack. The robust training parameter $\epsilon=0.3$ for MNIST. With robust training (purple) the distortion needed to fool a model increases dramatically with less than $1\%$ accuracy loss.} 
  \label{fig:robustness_vs_acc_xgboost_mnist}
\end{figure}

We also test our framework on random forest models and the results are shown in Section~\ref{sec:rf} in the appendix.

\vspace{-0.5cm}\section{Conclusion}
\label{sec:conclusion}
In this paper, we study the robustness of tree-based machine learning models under adversarial attacks. Our experiments show that just as in DNNs, tree-based models are also vulnerable to adversarial attacks. To address this issue, we propose a novel robust decision tree training framework. We make necessary approximations to ensure scalability and implement our framework in both classical decision tree and tree boosting settings. Extensive experiments on a variety of datasets show that our method substantially improves model robustness. Our framework can be extended to other tree-based models such as Gini impurity based classification trees, random forest, and CART. 

\clearpage
\section*{Acknowledgements}
The authors thank Aleksander M\k{a}dry for fruitful discussions. The authors also acknowledge the support of NSF via IIS-1719097, Intel, Google Cloud, Nvidia, SenseTime and IBM.

\bibliography{ICML-2019-TreeAdvAttack}
\bibliographystyle{icml2019}
\newpage
\clearpage
\appendix
\section{Proof of Theorem \ref{thm}}
\label{sec:proof}
Here we prove Theorem \ref{thm} for information gain score.
\begin{proof}
$H(y)$ and $H(y|x^{(j)}<\eta)$ are defined as $$H(y)=-\frac{|\mathcal{I}_0|}{|\mathcal{I}|}\log(\frac{|\mathcal{I}_0|}{|\mathcal{I}|})-\frac{|\mathcal{I}_1|}{|\mathcal{I}|}\log(\frac{|\mathcal{I}_1|}{|\mathcal{I}|}),$$
and 
\begin{equation*}\begin{split}
&H(y|x^{(j)}<\eta)=\\
&-\frac{|\mathcal{I}_L|}{|\mathcal{I}|}\bigg[\frac{|\mathcal{I}_L\cap \mathcal{I}_0|}{|\mathcal{I}_L|}\log(\frac{|\mathcal{I}_L\cap \mathcal{I}_0|}{|\mathcal{I}_L|})+\frac{|\mathcal{I}_L\cap \mathcal{I}_1|}{|\mathcal{I}_L|}\log(\frac{|\mathcal{I}_L\cap \mathcal{I}_1|}{|\mathcal{I}_L|})\bigg]\\
&-\frac{|\mathcal{I}_R|}{|\mathcal{I}|}\bigg[\frac{|\mathcal{I}_R\cap \mathcal{I}_0|}{|\mathcal{I}_R|}\log(\frac{|\mathcal{I}_R\cap \mathcal{I}_0|}{|\mathcal{I}_R|})+\frac{|\mathcal{I}_R\cap \mathcal{I}_1|}{|\mathcal{I}_R|}\log(\frac{|\mathcal{I}_R\cap \mathcal{I}_1|}{|\mathcal{I}_R|})\bigg].\end{split}\end{equation*} For simplicity, we denote $N_0\vcentcolon=|\mathcal{I}_0|$, $N_1\vcentcolon=|\mathcal{I}_1|$, $n_0\vcentcolon=|\mathcal{I}_L\cap \mathcal{I}_0|$ and $n_1\vcentcolon=|\mathcal{I}_L\cap \mathcal{I}_1|$.
The information gain of this split can be written as a function of $n_0$ and $n_1$:
\begin{equation}
\begin{split}
IG&=C_1[n_0\log(\frac{n_0}{N_0(n_1+n_0)})+n_1\log(\frac{n_1}{N_1(n_1+n_0)})\\&+(N_0-n_0)\log(\frac{N_0-n_0}{N_0(N_1+N_0-n_1-n_0)})\\&+(N_1-n_1)\log(\frac{N_1-n_1}{N_1(N_1+N_0-n_1-n_0)})]+C_2,
\end{split}
\label{eq:IG}
\end{equation}
where $C_1>0$ and $C_2$ are constants with respect to $n_0$. Taking $n_0$ as a continuous variable, we have
\begin{equation}
\frac{\partial IG}{\partial n_0}
=C_1\cdot\log(1+\frac{n_0N_1-N_0n_1}{(N_0-n_0)(n_1+n_0)})
\end{equation}
When $\frac{\partial IG}{\partial n_0}<0$, perturbing one example in $\Delta \mathcal{I}_R$ with label 0 to $\mathcal{I}_L$ will increase $n_0$ and decrease the information gain. It is easy to see that $\frac{\partial IG}{\partial n_0}<0$ if and only if $\frac{n_0}{N_0}<\frac{n_1}{N_1}.$
This indicates that when $\frac{n_0}{N_0}<\frac{n_1}{N_1}$ and $\frac{n_0+1}{N_0}\leq\frac{n_1}{N_1}$, perturbing one example with label 0 to $\mathcal{I}_L$ will always decrease the information gain. 
\end{proof}Similarly, if $\frac{n_1}{N_1}<\frac{n_0}{N_0}$ and $\frac{n_1+1}{N_1}\leq\frac{n_0}{N_0}$, perturbing one example in $\Delta \mathcal{I}_R$ with label 1 to $\mathcal{I}_L$ will decrease the information gain. As mentioned in the main text, to decrease the information gain score in Algorithm~\ref{alg:robust_infogain_split}, the adversary needs to perturb examples in $\Delta\mathcal{I}$ such that $\frac{n_0}{N_0}$ and $\frac{n_1}{N_1}$ are close to each other. Algorithm~\ref{alg:split_min} gives an $O(|\Delta \mathcal{I}|)$ method to find $\Delta n_0^*$ and $\Delta n_1^*$, the optimal number of points in $\Delta \mathcal{I}$ with label 0 and 1 to be added to the left.
\begin{algorithm}[tb]
\caption{Finding $\Delta n_0^*$ and $\Delta n_1^*$ to Minimize Information Gain or Gini Impurity}
\label{alg:split_min}
\begin{algorithmic}
\STATE {\bfseries Input:} $N_0$ and $N_1$, number of instances with label 0 and 1. $n_0^o$ and $n_1^o$, number of instances with label 0 and 1 that are certainly on the left.
\STATE {\bfseries Input:} $|\Delta \mathcal{I}\cap \mathcal{I}_0|$ and $|\Delta \mathcal{I}\cap \mathcal{I}_1|$, number of instances with label 0 and 1 that can be perturbed.
\STATE {\bfseries Output:} $\Delta n_0^*$, $\Delta n_1^*$, optimal number of points with label 0 and 1 in $\Delta\mathcal{I}$ to be place on the left.
\STATE$\Delta n_0^*\leftarrow 0,\ \Delta n_1^*\leftarrow 0,\ \text{min\_diff}\leftarrow |\frac{n_0^o}{N_0}-\frac{n_1^o}{N_1}|$;
    \FOR{$\Delta n_0\leftarrow 0$ {\bfseries to} $|\Delta \mathcal{I}\cap \mathcal{I}_0|$}
        \STATE$\text{ceil}\leftarrow \lceil \frac{N_1(n_0^o+\Delta n_0)}{N_0} \rceil-n_1^o$;
        \STATE$\text{floor}\leftarrow \lfloor \frac{N_1(n_0^o+\Delta n_0)}{N_0} \rfloor-n_1^o$;
        \FOR{$\Delta n'_1$ {\bfseries in} $\{\text{ceil},\ \text{floor}\}$}
        \STATE $\Delta n_1\leftarrow \max\{\min\{\Delta n'_1,\ |\Delta \mathcal{I}\cap \mathcal{I}_1|\},\ 0\};$
        \IF {$\text{min\_diff}> |\frac{\Delta n_0+n_0^0}{N_0}-\frac{\Delta n_1+n_1^0}{N_1}|$}
        \STATE $\Delta n_0^*\leftarrow \Delta n_0,\ \Delta n_1^*\leftarrow \Delta n_1,\ \text{min\_diff}\leftarrow|\frac{\Delta n_0+n_0^0}{N_0}-\frac{\Delta n_1+n_1^0}{N_1}|$;
        \ENDIF
        \ENDFOR
    \ENDFOR
    \STATE Return $\Delta n_0^*$ and $\Delta n_1^*$;
    \end{algorithmic}
\end{algorithm}

\section{Gini Impurity Score}
\label{sec:gini}
We also have a theorem for Gini impurity score similar to Theorem~\ref{thm}.
\begin{theorem}
\label{thm:gini}
If $\frac{n_0}{N_0}<\frac{n_1}{N_1}$ and $\frac{n_0+1}{N_0}\leq\frac{n_1}{N_1}$, perturbing one example in $\Delta \mathcal{I}_R$ with label 0 to $\mathcal{I}_L$ will decrease the Gini impurity.
\end{theorem}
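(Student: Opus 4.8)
The plan is to follow the proof of Theorem~\ref{thm} verbatim in structure, swapping the entropy calculation for the Gini impurity and carrying the sign through to the end. For a set containing $a$ label-$0$ and $b$ label-$1$ examples, the binary Gini impurity is $G(a,b)=1-\left(\frac{a}{a+b}\right)^2-\left(\frac{b}{a+b}\right)^2=\frac{2ab}{(a+b)^2}$, and the Gini impurity score (the quantity maximized in natural training, exactly analogous to the information gain) is the impurity reduction $S = G(N_0,N_1)-\frac{|\mathcal{I}_L|}{|\mathcal{I}|}G(n_0,n_1)-\frac{|\mathcal{I}_R|}{|\mathcal{I}|}G(N_0-n_0,N_1-n_1)$. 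Because the parent term $G(N_0,N_1)$ is independent of the split, decreasing $S$ is equivalent to increasing the weighted child impurity \[ W(n_0):=\frac{2}{N_0+N_1}\left[\frac{n_0 n_1}{n_0+n_1}+\frac{(N_0-n_0)(N_1-n_1)}{(N_0-n_0)+(N_1-n_1)}\right], \] viewed as a function of $n_0$ with $n_1,N_0,N_1$ held fixed.

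First I would treat $n_0$ as a continuous variable, just as in the proof of Theorem~\ref{thm}. Each summand of $W$ has the form $\frac{uv}{u+v}$, whose derivative in one argument equals the square of the other over $(u+v)^2$; accounting for the factor $-1$ from differentiating $N_0-n_0$ in the second summand, this yields \[ \frac{\partial W}{\partial n_0}=\frac{2}{N_0+N_1}\left[\frac{n_1^2}{(n_0+n_1)^2}-\frac{(N_1-n_1)^2}{\left((N_0-n_0)+(N_1-n_1)\right)^2}\right]. \] Since every quantity inside the squares is nonnegative, I can take positive square roots to conclude that $\frac{\partial W}{\partial n_0}>0$ iff $\frac{n_1}{n_0+n_1}>\frac{N_1-n_1}{(N_0-n_0)+(N_1-n_1)}$. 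Cross-multiplying (both denominators are positive) and cancelling common terms, this collapses to $n_1 N_0 - n_0 N_1>0$, i.e.\ exactly $\frac{n_0}{N_0}<\frac{n_1}{N_1}$ --- the same threshold that governs the information-gain case.

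It remains to pass from this statement about the continuous derivative to the discrete perturbation $n_0\mapsto n_0+1$ produced by moving one label-$0$ example from $\Delta\mathcal{I}_R$ into $\mathcal{I}_L$. The sign of $\frac{\partial W}{\partial t}$ at an intermediate $t\in[n_0,n_0+1]$ is controlled by comparing $\frac{t}{N_0}$ with the constant $\frac{n_1}{N_1}$, and $\frac{t}{N_0}$ is increasing in $t$; the first hypothesis $\frac{n_0}{N_0}<\frac{n_1}{N_1}$ pins down the left endpoint while the second hypothesis $\frac{n_0+1}{N_0}\le\frac{n_1}{N_1}$ pins down the right endpoint, so $\frac{\partial W}{\partial t}>0$ on $[n_0,n_0+1)$ and $\ge0$ at $t=n_0+1$. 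Integrating over the unit interval gives $W(n_0+1)>W(n_0)$, and therefore $S$ strictly decreases, as claimed.

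I expect this last, continuous-to-discrete step to be the only real obstacle, and it is precisely why \emph{two} hypotheses are required: a favorable derivative at the single point $n_0$ does not by itself force a finite step of size one to move $W$ in the same direction, so the condition $\frac{n_0+1}{N_0}\le\frac{n_1}{N_1}$ is exactly what keeps the derivative's sign fixed across the whole interval. The derivative computation and its reduction to the comparison $\frac{n_0}{N_0}$ versus $\frac{n_1}{N_1}$ are routine once the $\frac{uv}{u+v}$ form is recognized, and the strict hypothesis $\frac{n_0}{N_0}<\frac{n_1}{N_1}$ conveniently rules out the degenerate cases (an empty child, or $n_1=0$) in which the formula for $W$ would be ill-defined.
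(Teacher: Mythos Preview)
Your proposal is correct and follows essentially the same approach as the paper: both compute the derivative of the Gini score with respect to $n_0$ (treated as a continuous variable) and show its sign is governed by the comparison $\frac{n_0}{N_0}$ versus $\frac{n_1}{N_1}$. Your algebra takes a slightly cleaner route---recognizing the $\frac{uv}{u+v}$ structure and using a difference-of-squares factorization---whereas the paper combines everything into a single rational expression and factors out $(\frac{n_0}{m_0}-\frac{n_1}{m_1})$; and you are more explicit than the paper about why the second hypothesis $\frac{n_0+1}{N_0}\le\frac{n_1}{N_1}$ is needed to push the derivative sign across the full unit interval, which the paper's Gini proof leaves entirely implicit.
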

\begin{proof}
The Gini impurity score of a split with threshold $\eta$ on feature $j$ is 
\begin{equation}
\begin{split}
Gini&=(1-\frac{|\mathcal{I}_0|^2}{|\mathcal{I}|^2}-\frac{|\mathcal{I}_1|^2}{|\mathcal{I}|^2})\\&-\frac{|\mathcal{I}_L|}{|\mathcal{I}|}(1-\frac{|\mathcal{I}_0\cap\mathcal{I}_L|^2}{|\mathcal{I}_L|^2}-\frac{|\mathcal{I}_1\cap\mathcal{I}_L|^2}{|\mathcal{I}_L|^2})\\&-\frac{|\mathcal{I}_R|}{|\mathcal{I}|}(1-\frac{|\mathcal{I}_0\cap\mathcal{I}_R|^2}{|\mathcal{I}_R|^2}-\frac{|\mathcal{I}_1\cap\mathcal{I}_R|^2}{|\mathcal{I}_R|^2})\\&=C_3[\frac{n_0^2+n_1^2}{n_1+n_0}+\frac{(N_0-n_0)^2+(N_1-n_1)^2}{(N_0+N_1-n_0-n_1)}]+C_4,
\end{split}
\end{equation}
where we use the same notation as in \eqref{eq:IG}. $C_3>0$ and $C_4$ are constants with respect to $n_0$. Taking $n_0$ as a continuous variable, we have
\begin{equation}
\begin{split}
\frac{\partial\ Gini}{\partial n_0}=2C_3\frac{m_1m_0(n_0m_1+n_1m_0+2n_1m_1)}{(n_0+n_1)^2(m_0+m_1)^2}(\frac{n_0}{m_0}-\frac{n_1}{m_1}),
\end{split}
\end{equation}
where $m_0:=N_0-n_0$ and $m_1:=N_1-n_1$. Then $\frac{\partial\ Gini}{\partial n_0}<0$ holds if $\frac{n_0}{m_0}<\frac{n_1}{m_1}$, which is equivalent to $\frac{n_0}{N_0}<\frac{n_1}{N_1}.$ 
\end{proof}
Since the conditions of Theorem~\ref{thm} and Theorem~\ref{thm:gini} are the same, Algorithm~\ref{alg:robust_infogain_split} and Algorithm~\ref{alg:split_min} also work for tree-based models using Gini impurity score. 

\section{Decision Boundaries of Robust and Natural Models}
Figure~\ref{fig:vanilla_decision_boundary} shows the decision boundaries and test accuracy of natural trees as well as robust trees with different $\epsilon$ values on two dimensional synthetic datasets. All trees have depth~5 and we plot training examples in the figure. The results show that the decision boundaries of our robust decision trees are simpler than the decision boundaries in natural decision trees, agreeing with the regularization argument in the main text.
\begin{figure}[htb]\centering
  \includegraphics[width=\linewidth]{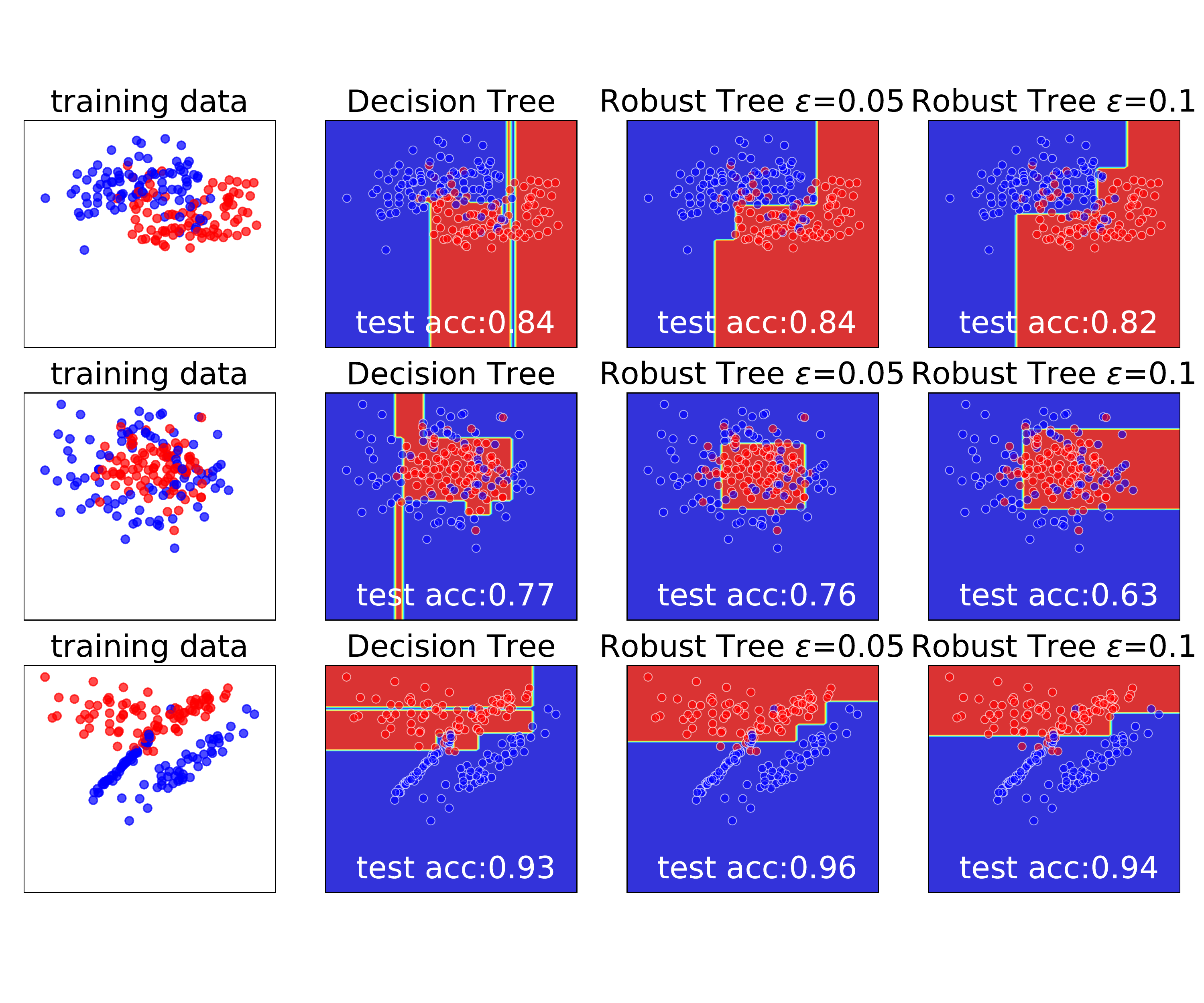}
  \caption{(Best viewed in color) The decision boundaries and test accuracy of natural decision trees and robust decision trees with depth~5 on synthetic datasets with two features. }
  \label{fig:vanilla_decision_boundary}
\end{figure}

\section{Omitted Results on $\ell_1$ and $\ell_2$ distortion}

In Tables~\ref{tab:vanilla_robust_acc_table_l1l2} and~\ref{tab:small_robust_acc_table_l1l2} we present the $\ell_1$ and $\ell_2$ distortions of vanilla (information gain based) decision trees and GBDT models obtained by Kantchelian's $\ell_1$ and $\ell_2$ attacks. Again, only small or medium sized binary classification models can be evaluated by Kantchelian's attack. From the results we can see that although our robust decision tree training algorithm is designed for $\ell_\infty$ perturbations, it can also improve models $\ell_1$ and $\ell_2$ robustness significantly.

\section{Omitted Results on Models with Different Number of Trees}
Figure~\ref{fig:robustness_vs_acc_xgboost_fashion} shows the $\ell_\infty$ distortion and accuracy of Fashion-MNIST GBDT models with different number of trees. In Table~\ref{tab:big_robust_acc_table} we present the test accuracy and $\ell_\infty$ distortion of models with different number of trees obtained by Cheng's $\ell_\infty$ attack. For each dataset, models are generated during a single boosting run. We can see that the robustness of robustly trained models consistently outperforms that of natural models with the same number of trees. Another interesting finding is that for MNIST and Fashion-MNIST datasets in Figures~\ref{fig:robustness_vs_acc_xgboost_mnist} (in the main text) and~\ref{fig:robustness_vs_acc_xgboost_fashion}, models with more trees are generally more robust. This may not be true in other datasets; for example, results from Table~\ref{tab:big_robust_acc_table} in the Appendix shows that on some other datasets, the natural GBDT models lose robustness when more trees are added.
\begin{figure}[htb]\centering
\includegraphics[width=1.0\linewidth]{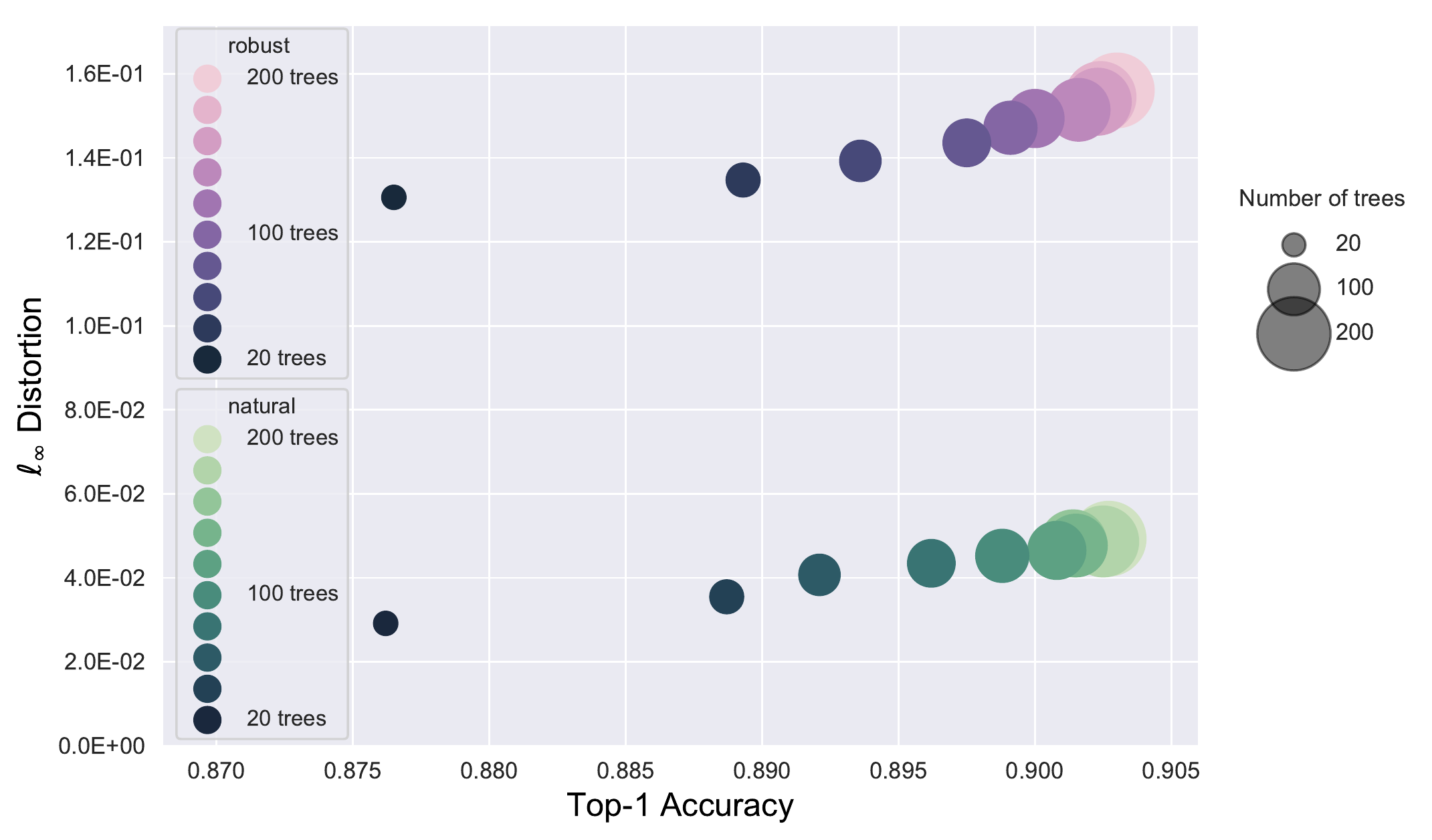}
  
  \caption{(Best viewed in color) $\ell_\infty$ distortion vs. classification accuracy of GBDT models on Fashion-MNIST datasets with different numbers of trees (circle size). The adversarial examples are found by Cheng's $\ell_\infty$ attack. The robust training parameter $\epsilon=0.1$ for Fashion-MNIST. With robust training (purple) the distortion needed to fool a model increases dramatically with less than $1\%$ accuracy loss.} 
  \label{fig:robustness_vs_acc_xgboost_fashion}
\end{figure}

\section{Reducing Depth Does Not Improve Robustness}
\label{sec:depth}
One might hope that one can simply reduce the depth of trees to improve robustness since shallower trees provide stronger regularization effects. Unfortunately, this is not true. As demonstrated in Figure~\ref{fig:robustness_vs_acc_xgboost_depth}, the robustness of naturally trained GBDT models are much worse when compared to robust models, no matter how shallow they are or how many trees are in the ensemble. Also, when the number of trees in the ensemble model is limited, reducing tree depth will significantly lower the model accuracy.
\begin{figure}[htb]\centering
\includegraphics[width=1.0\linewidth]{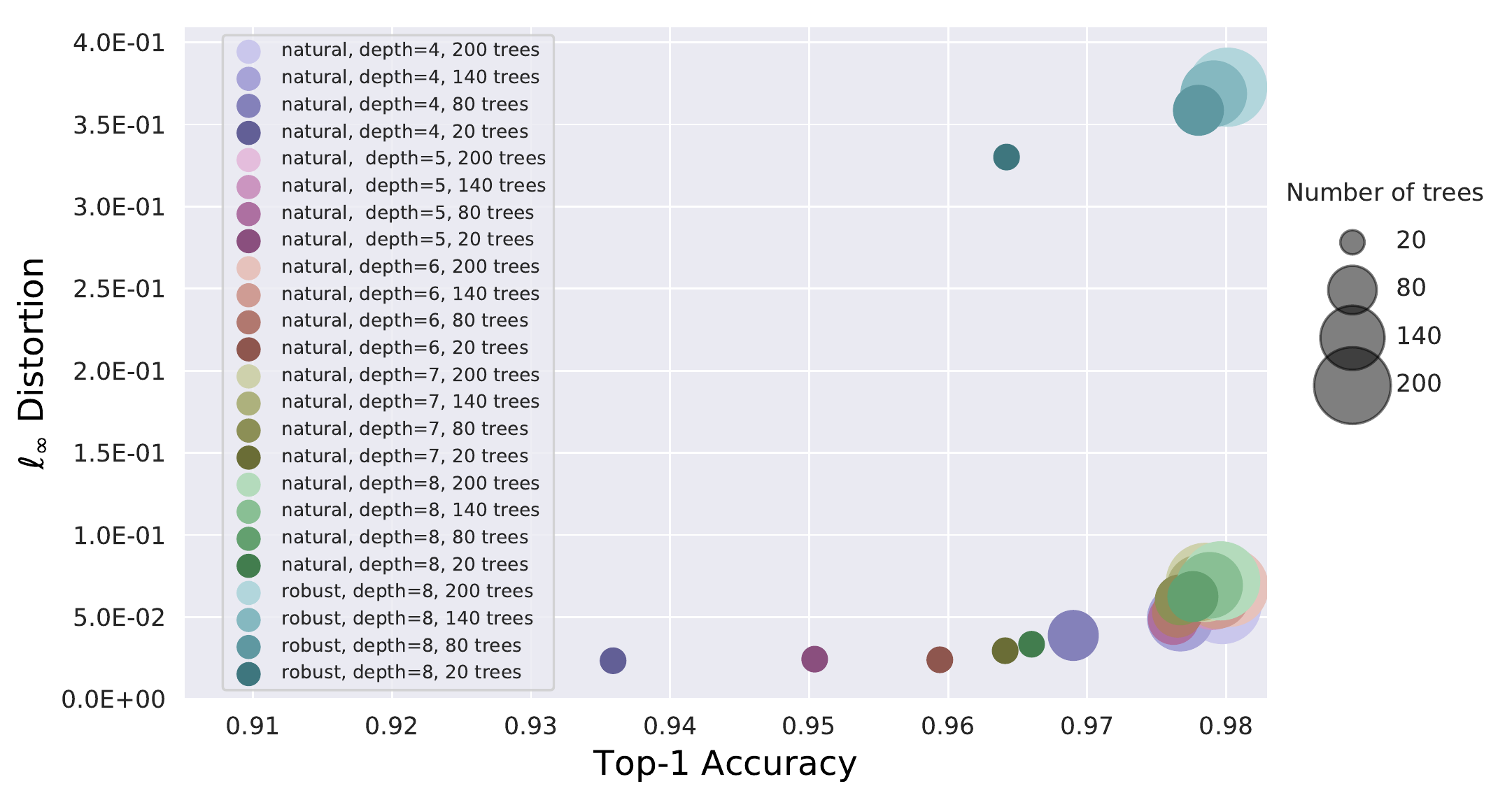}
  
  \caption{(Best viewed in color) Robustness vs. classification accuracy plot of GBDT models on MNIST dataset with different depth and different numbers of trees. The adversarial examples are found by Cheng's $\ell_\infty$ attack. The robust training parameter $\epsilon=0.3$. Reducing the model depth cannot improve robustness effectively compared to our proposed robust training procedure.}
  \label{fig:robustness_vs_acc_xgboost_depth}
\end{figure}

\section{Random Forest Model Results}
\label{sec:rf}
We test our robust training framework on random forest (RF) models and our results are in Table~\ref{tab:rf_table}. In these experiments we build random forest models with 0.5 data sampling rate and 0.5 feature sampling rate.  We test the robust and natural random forest model on three datasets and in each dataset, we tested 100 points using Cheng's and Kantchelian's $\ell_\infty$ attacks. From the results we can see that our robust decision tree training framework can also significantly improve random forest model robustness. 

\section{More MNIST and Fashion-MNIST Adversarial Examples}
In Figure~\ref{fig:appendix_demo} we present more adversarial examples for MNIST and Fashion-MNIST datasets using GBDT models.

\begin{table*}[htb]
\begin{center}
\scalebox{0.7}{
\setlength\tabcolsep{1.5pt}
\begin{tabular}{c|c|c|c|c|c|c|c|c|c|c|c|c|c}

\Xhline{5\arrayrulewidth}
    
    \multirow{2}*{Dataset}&training&test&\# of&\# of &\multirow{2}*{robust $\epsilon$}& \multicolumn{2}{c|}{depth}&\multicolumn{2}{c|}{test acc.}&\multicolumn{2}{c|}{\thead{avg. $\ell_1$ dist.\\by Kantchelian's $\ell_1$ attack}}&\multicolumn{2}{c}{\thead{avg. $\ell_2$ dist.\\by Kantchelian's $\ell_2$ attack}}\\
    &set size&set size&features&classes&&robust&natural&robust&natural&robust&natural&robust&natural\\\Xhline{3\arrayrulewidth}
    
    breast-cancer &546&137&10&2&0.3&5&5&.948&.942&\textbf{.534}&.270&\textbf{.504}&.209\\
    
    diabetes&614&154&8&2&0.2&5&5&.688&.747&\textbf{.204}&.075&\textbf{.204}&.065\\
    
    ionosphere&281&70&34&2&0.2&4&4&.986&.929&\textbf{.358}&.127&\textbf{.358}&.106\\
    
    \Xhline{5\arrayrulewidth}
    
\end{tabular}
}
\caption{The test accuracy and robustness of information gain based single decision tree models. The robustness is evaluated by the average $\ell_1$ and $\ell_2$ distortions of adversarial examples found by Kantchelian's $\ell_1$ and $\ell_2$ attacks. Average $\ell_\infty$ distortions of robust decision tree models found by the two attack methods are consistently larger than those of the naturally trained ones.}
\label{tab:vanilla_robust_acc_table_l1l2}
\end{center}
\end{table*}

\begin{table*}[htb!]
\begin{center}
\scalebox{0.7}{
\setlength\tabcolsep{1.5pt}
\begin{tabular}{c|c|c|c|c|c|c|c|c|c|c|c|c|c|c|c|c}

\Xhline{5\arrayrulewidth}
    
    \multirow{2}*{Dataset}&training&test&\# of&\# of &\# of &robust& \multicolumn{2}{c|}{depth}&\multicolumn{2}{c|}{test acc.}&\multicolumn{2}{c|}{\thead{avg. $\ell_1$ dist.\\by Kantchelian's $\ell_1$ attack}}&dist.&\multicolumn{2}{c|}{\thead{avg. $\ell_2$ dist.\\by Kantchelian's $\ell_2$ attack}}&dist.\\
    &set size&set size&features&classes&trees& $\epsilon$&robust&natural&robust&natural&robust&natural&improv.&robust&natural&improv.\\\Xhline{3\arrayrulewidth}
    
    breast-cancer &546&137&10&2&4&0.3&8&6&.978&.964&\textbf{.488}&.328&\textbf{1.49X}&\textbf{.431}&.251&\textbf{1.72X}\\
    
    cod-rna&59,535&271,617&8&2&80&0.2&5&4&.880&.965&\textbf{.065}&.059&\textbf{1.10X}&\textbf{.062}&.047&\textbf{1.32X}\\
    diabetes&614&154&8&2&20&0.2&5&5&.786&.773&\textbf{.150}&.081&\textbf{1.85X}&\textbf{.135}&.059&\textbf{2.29X}\\

    ijcnn1&49,990&91,701&22&2&60&0.1&8&8&.959&.980&\textbf{.057}&.051&\textbf{1.12X}&\textbf{.048}&.042&\textbf{1.14X}\\
    MNIST 2 vs. 6&11,876&1,990&784&2&1000&0.3&6&4&.997&.998&\textbf{1.843}&.721&\textbf{2.56X}&\textbf{.781}&.182&\textbf{4.29X}\\
    \Xhline{5\arrayrulewidth}

\end{tabular}
}
\caption{The test accuracy and robustness of GBDT models. Average $\ell_1$ and $\ell_2$ distortions of robust GBDT models are consistently larger than those of the naturally trained models. The robustness is evaluated by the average $\ell_1$ and $\ell_2$ distortions of adversarial examples found by Kantchelian's $\ell_1$ and $\ell_2$ attacks.}
\label{tab:small_robust_acc_table_l1l2}
\end{center}
\end{table*}

\begin{table*}[htb]
\begin{center}
\scalebox{0.7}{
\setlength\tabcolsep{1.5pt}
\begin{tabular}{c|c|c|c|c|c|c|c|c|c|c|c|c|c|c|c|c}

\Xhline{5\arrayrulewidth}
    
    \multirow{2}*{Dataset}&training&test&\# of&\# of &\# of &robust& \multicolumn{2}{c|}{depth}&\multicolumn{2}{c|}{test acc.}&\multicolumn{2}{c|}{\thead{avg. $\ell_\infty$ dist.\\by Cheng's $\ell_\infty$ attack}}&dist.&\multicolumn{2}{c|}{\thead{avg. $\ell_\infty$ dist.\\by Kantchelian's $\ell_\infty$ attack}}&dist.\\
    &set size&set size&features&classes&trees& $\epsilon$&robust&natural&robust&natural&robust&natural&improv.&robust&natural&improv.\\\Xhline{3\arrayrulewidth}
    breast-cancer &546&137&10&2&60&0.3&8&6&.993&.993&\textbf{.406}&.297&\textbf{1.37X}&\textbf{.396}&.244&\textbf{1.62X}\\
    diabetes&614&154&8&2&60&0.2&5&5&.753&.760&\textbf{.185}&.093&\textbf{1.99X}&\textbf{.154}&.072&\textbf{2.14X}\\
    MNIST 2 vs. 6&11,876&1,990&784&2&1000&0.3&6&4&.986&.983&\textbf{.445}&.180&\textbf{2.47X}&\textbf{.341}&.121&\textbf{2.82X}\\
    \Xhline{5\arrayrulewidth}

\end{tabular}
}
\caption{The test accuracy and robustness of random forest models. Average $\ell_\infty$ distortion of our robust random forest models are consistently larger than those of the naturally trained models. The robustness is evaluated by the average $\ell_\infty$ distortion of adversarial examples found by Cheng's and Kantchelian's attacks.}
\label{tab:rf_table}
\end{center}
\end{table*}

\begin{table*}[htb!]
\begin{center}
\scalebox{0.65}{
\setlength\tabcolsep{1.5pt}
\begin{tabular}{c?c?c?c?c?c|c?c|c?c|c?c|c?c|c?c|c?c|c?c|c?c|c?c|c}
    \Xhline{5\arrayrulewidth}
    
    \multirow{4}{*}{\thead{breast-cancer (2)\\ $\epsilon=0.3$\\$\text{depth}_r=8,\ \text{depth}_n=6$}}&train&test&feat.&\# of trees&\multicolumn{2}{c?}{1}&\multicolumn{2}{c?}{ 2}&\multicolumn{2}{c?}{ 3}&\multicolumn{2}{c?}{4}&\multicolumn{2}{c?}{5}&\multicolumn{2}{c?}{6}&\multicolumn{2}{c?}{ 7}&\multicolumn{2}{c?}{8}&\multicolumn{2}{c?}{ 9}&\multicolumn{2}{c}{10}\\\cline{2-25}
    &&&&model&rob.&nat.&rob.&nat.&rob.&nat.&rob.&nat.&rob.&nat.&rob.&nat.&rob.&nat.&rob.&nat.&rob.&nat.&rob.&nat.\\\cline{5-25}
    &546&137&10&tst. acc.&.985&.942&.971&.964&.978&.956&.978&.964&.985&.964&.985&.964&.985&.971&.993&.971&.993&.971&1.00&.971\\\cline{5-25}
   &&&&$\ell_\infty$ dist.&\textbf{.383}&.215&\textbf{.396}&.229&\textbf{.411}&.216&\textbf{.411}&.215&\textbf{.406}&.226&\textbf{.407}&.229&\textbf{.406}&.248&\textbf{.439}&.234&\textbf{.439}&.238&\textbf{.437}&.241\\
    \Xhline{5\arrayrulewidth}

    \multirow{4}{*}{\thead{covtype (7)\\$\epsilon=0.2$\\$\text{depth}_r=\text{depth}_n=8$}}&train&test&feat.&\# of trees&\multicolumn{2}{c?}{20}&\multicolumn{2}{c?}{ 40}&\multicolumn{2}{c?}{ 60}&\multicolumn{2}{c?}{80}&\multicolumn{2}{c?}{100}&\multicolumn{2}{c?}{ 120}&\multicolumn{2}{c?}{ 140}&\multicolumn{2}{c?}{160}&\multicolumn{2}{c?}{ 180}&\multicolumn{2}{c}{200}\\\cline{2-25}
    &&&&model&rob.&nat.&rob.&nat.&rob.&nat.&rob.&nat.&rob.&nat.&rob.&nat.&rob.&nat.&rob.&nat.&rob.&nat.&rob.&nat.\\\cline{5-25}
    &400,000&181,000&54&tst. acc.&.775&.828&.809&.850&.832&.865&.847&.877&.858&.891&.867&.902&.875&.912&.882&.921&.889&.926&.894&.930\\\cline{5-25}
   &&&&$\ell_\infty$ dist.&\textbf{.125}&.066&\textbf{.103}&.064&\textbf{.087}&.062&\textbf{.081}&.061&\textbf{.079}&.060&\textbf{.077}&.059&\textbf{.077}&.058&\textbf{.075}&.056&\textbf{.075}&.056&\textbf{.073}&.055\\
    \Xhline{5\arrayrulewidth}
    
    \multirow{4}{*}{\thead{cod-rna (2)\\ $\epsilon=0.2$\\$\text{depth}_r=5,\ \text{depth}_n=4$}}&train&test&feat.&\# of trees&\multicolumn{2}{c?}{20}&\multicolumn{2}{c?}{ 40}&\multicolumn{2}{c?}{ 60}&\multicolumn{2}{c?}{80}&\multicolumn{2}{c?}{100}&\multicolumn{2}{c?}{ 120}&\multicolumn{2}{c?}{ 140}&\multicolumn{2}{c?}{160}&\multicolumn{2}{c?}{ 180}&\multicolumn{2}{c}{200}\\\cline{2-25}
    &&&&model&rob.&nat.&rob.&nat.&rob.&nat.&rob.&nat.&rob.&nat.&rob.&nat.&rob.&nat.&rob.&nat.&rob.&nat.&rob.&nat.\\\cline{5-25}
    &59,535&271,617&8&tst. acc.&.810&.947&.861&.959&.874&.963&.880&.965&.892&.966&.900&.967&.903&.967&.915&.967&.922&.967&.925&.968\\\cline{5-25}
   &&&&$\ell_\infty$ dist.&\textbf{.077}&.057&\textbf{.066}&.055&\textbf{.063}&.054&\textbf{.062}&.053&\textbf{.059}&.053&\textbf{.057}&.052&\textbf{.056}&.052&\textbf{.056}&.052&\textbf{.056}&.052&\textbf{.058}&.052\\
    \Xhline{5\arrayrulewidth}
    
    \multirow{4}{*}{\thead{diabetes (2)\\ $\epsilon=0.2$\\$\text{depth}_r=\text{depth}_n=5$}}&train&test&feat.&\# of trees&\multicolumn{2}{c?}{2}&\multicolumn{2}{c?}{ 4}&\multicolumn{2}{c?}{ 6}&\multicolumn{2}{c?}{8}&\multicolumn{2}{c?}{10}&\multicolumn{2}{c?}{ 12}&\multicolumn{2}{c?}{ 14}&\multicolumn{2}{c?}{16}&\multicolumn{2}{c?}{ 18}&\multicolumn{2}{c}{20}\\\cline{2-25}
    &&&&model&rob.&nat.&rob.&nat.&rob.&nat.&rob.&nat.&rob.&nat.&rob.&nat.&rob.&nat.&rob.&nat.&rob.&nat.&rob.&nat.\\\cline{5-25}
    &614&154&8&tst. acc.&.760&.753&.760&.753&.766&.753&.773&.753&.773&.734&.779&.727&.779&.747&.779&.760&.779&.773&.786&.773\\\cline{5-25}
   &&&&$\ell_\infty$ dist.&\textbf{.163}&.066&\textbf{.163}&.065&\textbf{.154}&.071&\textbf{.151}&.071&\textbf{.152}&.073&\textbf{.148}&.072&\textbf{.146}&.067&\textbf{.144}&.062&\textbf{.138}&.062&\textbf{.139}&.060\\
    \Xhline{5\arrayrulewidth}
    
    \multirow{4}{*}{\thead{Fashion-MNIST (10) \\ $\epsilon=0.1$\\$\text{depth}_r=\text{depth}_n=8$}}&train&test&feat.&\# of trees&\multicolumn{2}{c?}{20}&\multicolumn{2}{c?}{ 40}&\multicolumn{2}{c?}{ 60}&\multicolumn{2}{c?}{80}&\multicolumn{2}{c?}{100}&\multicolumn{2}{c?}{ 120}&\multicolumn{2}{c?}{ 140}&\multicolumn{2}{c?}{160}&\multicolumn{2}{c?}{ 180}&\multicolumn{2}{c}{200}\\\cline{2-25}
    &&&&model&rob.&nat.&rob.&nat.&rob.&nat.&rob.&nat.&rob.&nat.&rob.&nat.&rob.&nat.&rob.&nat.&rob.&nat.&rob.&nat.\\\cline{5-25}
    &60,000&10,000&784&tst. acc.&.877&.876&.889&.889&.894&.892&.898&.896&.899&.899&.900&.901&.902&.902&.902&.901&.902&.903&.903&.903\\\cline{5-25}
   &&&&$\ell_\infty$ dist.&\textbf{.131}&.029&\textbf{.135}&.035&\textbf{.139}&.041&\textbf{.144}&.043&\textbf{.147}&.045&\textbf{.149}&.047&\textbf{.151}&.048&\textbf{.153}&.048&\textbf{.154}&.049&\textbf{.156}&.049\\
    \Xhline{5\arrayrulewidth}
    
    \multirow{4}{*}{\thead{HIGGS (2)\\ $\epsilon=0.05$\\$\text{depth}_r=\text{depth}_n=8$}}&train&test&feat.&\# of trees&\multicolumn{2}{c?}{50}&\multicolumn{2}{c?}{100}&\multicolumn{2}{c?}{ 150}&\multicolumn{2}{c?}{200}&\multicolumn{2}{c?}{250}&\multicolumn{2}{c?}{ 300}&\multicolumn{2}{c?}{350}&\multicolumn{2}{c?}{400}&\multicolumn{2}{c?}{ 450}&\multicolumn{2}{c}{500}\\\cline{2-25}
    &&&&model&rob.&nat.&rob.&nat.&rob.&nat.&rob.&nat.&rob.&nat.&rob.&nat.&rob.&nat.&rob.&nat.&rob.&nat.&rob.&nat.\\\cline{5-25}
    &10,500,000&500,000&28&tst. acc.&.676&.747&.688&.753&.700&.755&.702&.758&.705&.759&.709&.760&.711&.762&.712&.764&.716&.763&.718&.764\\\cline{5-25}
   &&&&$\ell_\infty$ dist.&\textbf{.023}&.013&\textbf{.023}&.014&\textbf{.022}&.014&\textbf{.022}&.014&\textbf{.022}&.014&\textbf{.022}&.014&\textbf{.021}&.015&\textbf{.021}&.015&\textbf{.021}&.015&\textbf{.021}&.015\\
    \Xhline{5\arrayrulewidth}
    
    \multirow{4}{*}{\thead{ijcnn1 (2)\\ $\epsilon=0.1$\\$\text{depth}_r=\text{depth}_n=8$}}&train&test&feat.&\# of trees&\multicolumn{2}{c?}{10}&\multicolumn{2}{c?}{ 20}&\multicolumn{2}{c?}{ 30}&\multicolumn{2}{c?}{40}&\multicolumn{2}{c?}{50}&\multicolumn{2}{c?}{60}&\multicolumn{2}{c?}{ 70}&\multicolumn{2}{c?}{80}&\multicolumn{2}{c?}{90 }&\multicolumn{2}{c}{100}\\\cline{2-25}
    &&&&model&rob.&nat.&rob.&nat.&rob.&nat.&rob.&nat.&rob.&nat.&rob.&nat.&rob.&nat.&rob.&nat.&rob.&nat.&rob.&nat.\\\cline{5-25}
    &49,990&91,701&22&tst. acc.&.933&.973&.942&.977&.947&.977&.952&.979&.958&.980&.959&.980&.962&.980&.964&.980&.967&.980&.968&.980\\\cline{5-25}
   &&&&$\ell_\infty$ dist.&\textbf{.065}&.048&\textbf{.061}&.047&\textbf{.058}&.048&\textbf{.057}&.047&\textbf{.054}&.048&\textbf{.054}&.047&\textbf{.054}&.047&\textbf{.053}&.047&\textbf{.052}&.047&\textbf{.052}&.047\\
    \Xhline{5\arrayrulewidth}
    
    
    \multirow{4}{*}{\thead{MNIST (10)\\ $\epsilon=0.3$\\$\text{depth}_r=\text{depth}_n=8$}}&train&test&feat.&\# of trees&\multicolumn{2}{c?}{20}&\multicolumn{2}{c?}{ 40}&\multicolumn{2}{c?}{ 60}&\multicolumn{2}{c?}{80}&\multicolumn{2}{c?}{100}&\multicolumn{2}{c?}{ 120}&\multicolumn{2}{c?}{ 140}&\multicolumn{2}{c?}{160}&\multicolumn{2}{c?}{ 180}&\multicolumn{2}{c}{200}\\\cline{2-25}
    &&&&model&rob.&nat.&rob.&nat.&rob.&nat.&rob.&nat.&rob.&nat.&rob.&nat.&rob.&nat.&rob.&nat.&rob.&nat.&rob.&nat.\\\cline{5-25}
    &$60,000$&$10,000$&784&tst. acc.&.964&.966&.973&.975&.977&.977&.978&.978&.978&.978&.979&.979&.979&.979&.980&.979&.980&.979&.980&.980\\\cline{5-25}
   &&&&$\ell_\infty$ dist.&\textbf{.330}&.033&\textbf{.343}&.049&\textbf{.352}&.057&\textbf{.359}&.062&\textbf{.363}&.065&\textbf{.367}&.067&\textbf{.369}&.069&\textbf{.370}&.071&\textbf{.371}&.072&\textbf{.373}&.072\\
    \Xhline{5\arrayrulewidth}
    
    \multirow{4}{*}{\thead{Sensorless (11)\\ $\epsilon=0.05$\\$\text{depth}_r=\text{depth}_n=6$}}&train&test&feat.&\# of trees&\multicolumn{2}{c?}{3}&\multicolumn{2}{c?}{ 6}&\multicolumn{2}{c?}{ 9}&\multicolumn{2}{c?}{12}&\multicolumn{2}{c?}{15}&\multicolumn{2}{c?}{18}&\multicolumn{2}{c?}{ 21}&\multicolumn{2}{c?}{24}&\multicolumn{2}{c?}{ 27}&\multicolumn{2}{c}{30}\\\cline{2-25}
    &&&&model&rob.&nat.&rob.&nat.&rob.&nat.&rob.&nat.&rob.&nat.&rob.&nat.&rob.&nat.&rob.&nat.&rob.&nat.&rob.&nat.\\\cline{5-25}
    &48,509&10,000&48&tst. acc.&.834&.977&.867&.983&.902&.987&.923&.991&.945&.992&.958&.994&.966&.996&.971&.996&.974&.997&.978&.997\\\cline{5-25}
   &&&&$\ell_\infty$ dist.&\textbf{.037}&.022&\textbf{.036}&.022&\textbf{.035}&.023&\textbf{.035}&.023&\textbf{.035}&.023&\textbf{.035}&.023&\textbf{.035}&.023&\textbf{.035}&.023&\textbf{.035}&.023&\textbf{.035}&.023\\
    \Xhline{5\arrayrulewidth}
    
    \multirow{4}{*}{\thead{webspam (2)\\ $\epsilon=0.05$\\$\text{depth}_r=\text{depth}_n=8$}}&train&test&feat.&\# of trees&\multicolumn{2}{c?}{10}&\multicolumn{2}{c?}{ 20}&\multicolumn{2}{c?}{ 30}&\multicolumn{2}{c?}{40}&\multicolumn{2}{c?}{50}&\multicolumn{2}{c?}{ 60}&\multicolumn{2}{c?}{ 70}&\multicolumn{2}{c?}{80}&\multicolumn{2}{c?}{ 90}&\multicolumn{2}{c}{100}\\\cline{2-25}
    &&&&model&rob.&nat.&rob.&nat.&rob.&nat.&rob.&nat.&rob.&nat.&rob.&nat.&rob.&nat.&rob.&nat.&rob.&nat.&rob.&nat.\\\cline{5-25}
    &$300,000$&$50,000$&254&tst. acc.&.950&.976&.964&.983&.970&.986&.973&.989&.976&.990&.978&.990&.980&.991&.981&.991&.982&.992&.983&.992\\\cline{5-25}
   &&&&$\ell_\infty$ dist.&\textbf{.049}&.010&\textbf{.048}&.015&\textbf{.049}&.019&\textbf{.049}&.021&\textbf{.049}&.023&\textbf{.049}&.024&\textbf{.049}&.024&\textbf{.049}&.024&\textbf{.048}&.024&\textbf{.049}&.024\\
    \Xhline{5\arrayrulewidth}

\end{tabular}
}
\caption{The test accuracy and robustness of GBDT models. Here $\text{depth}_n$ is the depth of natural trees and $\text{depth}_r$ is the depth of robust trees. Robustness is evaluated by the average $\ell_\infty$ distortion of adversarial examples found by Cheng's attack~\citep{cheng2018queryefficient}. The number in the parentheses after each dataset name is the number of classes. Models are generated during a single boosting run. We can see that the robustness of our robust models consistently outperforms that of natural models with the same number of trees.}
\label{tab:big_robust_acc_table}
\end{center}
\end{table*}

\begin{figure*}[htb]
\centering
\begin{tabular}{c|c|c?c|c|c}
\textbf{Original}&\thead{Adversarial of \\ nat. GBDT}&\thead{Adversarial of \\ rob. GBDT}&\textbf{Original}&\thead{Adversarial of \\ nat. GBDT}&\thead{Adversarial of \\ rob. GBDT}\\
\begin{subfigure}[t]{0.12\textwidth}
\centering
  \includegraphics[width=\linewidth]{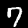}
  \caption{pred.=7~~~~~~~~~~~~~~~~~~~~~~~~}
\end{subfigure}&
\begin{subfigure}[t]{0.12\textwidth}
\centering
  \includegraphics[width=\linewidth]{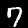}
  \caption{\tabular[t]{@{}l@{}}$\ell_\infty$ dist.$=0.002$\\pred.=9\endtabular}
\end{subfigure}&
\begin{subfigure}[t]{0.12\textwidth}
\centering
  \includegraphics[width=\linewidth]{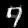}
  \caption{\tabular[t]{@{}l@{}}$\ell_\infty$ dist.$=0.305$\\pred.=9\endtabular}
\end{subfigure}&
\begin{subfigure}[t]{0.12\textwidth}
\centering
  \includegraphics[width=\linewidth]{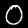}
  \caption{pred.=0~~~~~~~~~~~~~~~~~~~~~~~~}
\end{subfigure}&
\begin{subfigure}[t]{0.12\textwidth}
\centering
  \includegraphics[width=\linewidth]{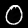}
  \caption{\tabular[t]{@{}l@{}}$\ell_\infty$ dist.$=0.018$\\pred.=8\endtabular}
\end{subfigure}&
\begin{subfigure}[t]{0.12\textwidth}
\centering
  \includegraphics[width=\linewidth]{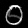}
  \caption{\tabular[t]{@{}l@{}}$\ell_\infty$ dist.$=0.327$\\pred.=5\endtabular}
\end{subfigure}\\
\begin{subfigure}[t]{0.12\textwidth}
\centering
  \includegraphics[width=\linewidth]{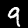}
  \caption{pred.=9~~~~~~~~~~~~~~~~~~~~~~~~}
\end{subfigure}&
\begin{subfigure}[t]{0.12\textwidth}
\centering
  \includegraphics[width=\linewidth]{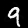}
  \caption{\tabular[t]{@{}l@{}}$\ell_\infty$ dist.$=0.025$\\pred.=4\endtabular}
\end{subfigure}&
\begin{subfigure}[t]{0.12\textwidth}
\centering
  \includegraphics[width=\linewidth]{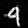}
  \caption{\tabular[t]{@{}l@{}}$\ell_\infty$ dist.$=0.402$\\pred.=4\endtabular}
\end{subfigure}&
\begin{subfigure}[t]{0.12\textwidth}
\centering
  \includegraphics[width=\linewidth]{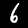}
  \caption{pred.=6~~~~~~~~~~~~~~~~~~~~~~~~}
\end{subfigure}&
\begin{subfigure}[t]{0.12\textwidth}
\centering
  \includegraphics[width=\linewidth]{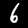}
  \caption{\tabular[t]{@{}l@{}}$\ell_\infty$ dist.$=0.014$\\pred.=8\endtabular}
\end{subfigure}&
\begin{subfigure}[t]{0.12\textwidth}
\centering
  \includegraphics[width=\linewidth]{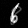}
  \caption{\tabular[t]{@{}l@{}}$\ell_\infty$ dist.$=0.329$\\pred.=8\endtabular}
\end{subfigure}\\
\begin{subfigure}[t]{0.12\textwidth}
\centering
  \includegraphics[width=\linewidth]{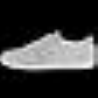}
  \caption{pred.=``Sneaker''}
\end{subfigure}&
\begin{subfigure}[t]{0.12\textwidth}
\centering
  \includegraphics[width=\linewidth]{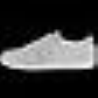}
  \caption{\tabular[t]{@{}l@{}}$\ell_\infty$ dist.$=0.025$\\pred.=``Bag''\endtabular}
\end{subfigure}&
\begin{subfigure}[t]{0.12\textwidth}
\centering
  \includegraphics[width=\linewidth]{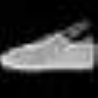}
  \caption{\tabular[t]{@{}l@{}}$\ell_\infty$ dist.$=0.482$\\pred.=``Sandal''\endtabular}
\end{subfigure}&
\begin{subfigure}[t]{0.12\textwidth}
\centering
  \includegraphics[width=\linewidth]{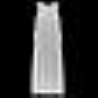}
  \caption{pred.=``Dress''}
\end{subfigure}&
\begin{subfigure}[t]{0.12\textwidth}
\centering
  \includegraphics[width=\linewidth]{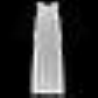}
  \caption{\tabular[t]{@{}l@{}}$\ell_\infty$ dist.$=0.024$\\pred.=``T-shirt/top''\endtabular}
\end{subfigure}&
\begin{subfigure}[t]{0.12\textwidth}
\centering
  \includegraphics[width=\linewidth]{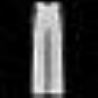}
  \caption{\tabular[t]{@{}l@{}}$\ell_\infty$ dist.$=0.340$\\pred.=``Trouser''\endtabular}
\end{subfigure}
\\
\begin{subfigure}[t]{0.12\textwidth}
\centering
  \includegraphics[width=\linewidth]{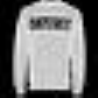}
  \caption{pred.=``Pullover''}
\end{subfigure}&
\begin{subfigure}[t]{0.12\textwidth}
\centering
  \includegraphics[width=\linewidth]{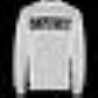}
  \caption{\tabular[t]{@{}l@{}}$\ell_\infty$ dist.$=0.017$\\pred.=``Bag''\endtabular}
\end{subfigure}&
\begin{subfigure}[t]{0.12\textwidth}
\centering
  \includegraphics[width=\linewidth]{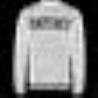}
  \caption{\tabular[t]{@{}l@{}}$\ell_\infty$ dist.$=0.347$\\pred.=``Coat''\endtabular}
\end{subfigure}&
\begin{subfigure}[t]{0.12\textwidth}
\centering
  \includegraphics[width=\linewidth]{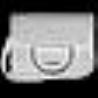}
  \caption{pred.=``Bag''~~~~~~~~~}
\end{subfigure}&
\begin{subfigure}[t]{0.12\textwidth}
\centering
  \includegraphics[width=\linewidth]{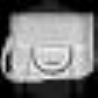}
  \caption{\tabular[t]{@{}l@{}}$\ell_\infty$ dist.$=0.033$\\pred.=``Shirt''\endtabular}
\end{subfigure}&
\begin{subfigure}[t]{0.12\textwidth}
\centering
  \includegraphics[width=\linewidth]{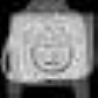}
  \caption{\tabular[t]{@{}l@{}}$\ell_\infty$ dist.$=0.441$\\pred.=``Coat''\endtabular}
\end{subfigure}
\end{tabular}
\caption{MNIST and Fashion-MNIST examples and their adversarial examples found using the untargeted Cheng's $\ell_\infty$ attack~\citep{cheng2018queryefficient} on 200-tree gradient boosted decision tree (GBDT) models trained using XGBoost with
depth=8. For both MNIST and Fashion-MNIST robust models, we use $\epsilon=0.3$. }
\label{fig:appendix_demo}
\end{figure*}

\end{document}